\pgfplotsset{compat=1.9}
\newtheorem{remark}{Remark}
\newtheorem{definition}{Definition}
\newtheorem{theorem}{Theorem}
\newtheorem{corollary}{Corollary}
\newcommand{\SREC}{S-REC}
\def\x{{x}}
\def\S{{\mathcal{S}}}
\newenvironment{proof}{\paragraph{Proof:}}{\hfill$\square$}
\newcommand{\s}{0.060}
\newcommand{\R}{\ensuremath{\mathbb{R}}}
\newcommand{\eps}{\ensuremath{\varepsilon}}
\DeclareMathOperator*{\argmin}{arg\,min}
\DeclarePairedDelimiter{\parens}{\lparen}{\rparen}
\DeclarePairedDelimiter{\abs}{\lvert}{\rvert}
\DeclarePairedDelimiter{\norm}{\lVert}{\rVert}
\DeclarePairedDelimiter{\ceil}{\lceil}{\rceil}
\DeclarePairedDelimiter{\floor}{\lfloor}{\rfloor}
\let\oldparens\parens
\def\parens{\@ifstar{\oldparens}{\oldparens*}}
\let\oldnorm\norm
\def\norm{\@ifstar{\oldnorm}{\oldnorm*}}
\let\oldceil\ceil
\def\ceil{\@ifstar{\oldceil}{\oldceil*}}
\let\oldfloor\floor
\def\floor{\@ifstar{\oldfloor}{\oldfloor*}}
\let\oldabs\abs
\def\abs{\@ifstar{\oldabs}{\oldabs*}}
\newcommand{\reals}{\mathbb{R}}
\newcommand{\vect}[1]{\mathbf{#1}}
\newcommand{\iprod}[2]{\left\langle #1, #2 \right\rangle}
\newcommand{\twonorm}[1]{\left\| {#1} \right\|_2}
\newcommand{\sign}[1]{\operatorname{sign}\left(#1\right)}
\newcommand{\distop}[2]{\mathrm{dist}\left(#1,#2\right)}
\newcommand{\rbrak}[1]{\left(#1\right)}
\newcommand{\cbrak}[1]{\left\{#1\right\}}
\newcommand{\y}{{y}}
\newcommand{\e}{{e}}
\newcommand{\z}{{z}}
\newcommand{\w}{{w}}
\newcommand{\xo}{{x^*}}
\newcommand{\ai}{{a}_i}
\newcommand{\p}{{p}}
\newcommand{\A}{{A}}
\newcommand{\subspaces}{{\mathcal{M}}}
\begin{document}

\title{
Provably Convergent Algorithms for Solving \\ Inverse Problems Using Generative Models
}

\author{Viraj Shah, Rakib Hyder, M. Salman Asif, and Chinmay Hegde% <-this % stops a space
\thanks{V. Shah was with Iowa State University. He is now with the ECE Department at the University of Illinois, Urbana-Champaign. (e-mail: vjshah3@illinois.edu).}%<-this % stops a space
\thanks{C. Hegde was with Iowa State University. He is now with the Tandon School of Engineering at New York University. (e-mail: chinmay.h@nyu.edu).}
\thanks{R. Hyder and M. Asif are with the ECE Department at the University of California Riverside. (e-mail: sasif@ece.ucr.edu).}%
\thanks{This work was completed when VS and CH were at Iowa State University, and were supported in part by grants CAREER CCF-1750920/2005804 and CCF-1815101, a faculty fellowship from the Black and Veatch Foundation, and an equipment donation from the NVIDIA Corporation. RH and MA were supported in part by grants CAREER CCF-2046293 and ONR N00014-19-1-2264, and equipment donation from NVIDIA. Parts of this manuscript appeared in short conference papers~\cite{ganICASSP},~\cite{ganAllerton18}, and \cite{hyder2019alternating}.
%%% TODO : Add affiliation and contact info
%% Add some funding info/acknowledgements here
}}

\maketitle
% \thispagestyle{empty}
% \pagestyle{empty}

%%%%%%%%%%%%%%%%%%%%%%%%%%%%%%%%%%%%%%%%%%%%%%%%%%%%%%%%%%%%%%%%%%%%%%%%%%%%%%%%
\begin{abstract}
The traditional approach of hand-crafting priors (such as sparsity) for solving inverse problems is slowly being replaced by the use of richer learned priors (such as those modeled by deep generative networks). In this work, we study the algorithmic aspects of such a learning-based approach from a theoretical perspective. For certain generative network architectures, we establish a simple non-convex algorithmic approach that (a) theoretically enjoys linear convergence guarantees for certain linear and nonlinear inverse problems, and (b) empirically improves upon conventional techniques such as back-propagation. We support our claims with the experimental results for solving various inverse problems. We also propose an extension of our approach that can handle model mismatch (i.e., situations where the generative network prior is not exactly applicable). Together, our contributions serve as building blocks towards a principled use of generative models in inverse problems with more complete algorithmic understanding.
\end{abstract}
%%%%%%%%%%%%%%%%%%%%%%%%%%%%%%%%%%%%%%%%%%%%%%%%%%%%%%%%%%%%%%%%%%%%%%%%%%%%%%%%

% \begin{IEEEkeywords}
% Compressed sensing, generative models, inverse problems, phase retrieval, provable algorithms
% \end{IEEEkeywords}

\section{Introduction}

\subsection{Motivation}

Inverse problems arise in a diverse range of application domains including computational imaging, optics, astrophysics, and seismic geo-exploration. In each of these applications, there is a target signal or image (or some other quantity of interest) to be obtained; a device (or some other physical process) records measurements of the target; and the goal is to reconstruct an estimate of the signal from the observations. 

%	Compressed sensing aims to solve the problem of reconstructing an unknown $n-$ dimensional vector from its $m$ linear measurements obtained in under-determined setting; \textit{i.e.} when $m < n$. Formally put, aim is to find $x^*$ in the following equation:
%	
%	where $x \in \R^n, y \in \R^m$. $A \in \R^{m \times n}$ is called measurement matrix.
%The problem of recovering $x^*$ given $y$ and $A$ is ill-posed and thus is extremely difficult to solve since the matrix $A$ has a non-trivial null space; \textit{i.e.}, there exists an infinite number of feasible solutions, out of which only a few are valid signals. Thus, an additional piece of information about structure of the $x^*$ is required to act as a regularization for the accurate recovery. 
Let us suppose that $x^* \in \R^n$ denotes the signal of interest and $y = \mathcal{A}(x^*) \in \R^m$ denotes the observed measurements. The aim is to recover (an estimate of) the unknown signal $x^*$ given $y$ and $\mathcal{A}$. Based on the forward measurement operator $\mathcal{A}$, the inverse problem can be defined in two broad categories of linear and nonlinear problems. Many important problems in signal and image processing can be modeled with a \emph{linear} measurement operator $\mathcal{A}$; examples include \emph{compressive sensing}, the classical problem of \emph{super-resolution} or the problem of \emph{image inpainting}.  In case of nonlinear inverse problems, the operator $\mathcal{A}$ exhibits a nonlinearity; examples include \emph{phase retrieval}, \textit{blind deconvolution}, and \emph{de-quantization}.

When $m < n$, the inverse problem is ill-posed, and some kind of prior (or regularizer) is necessary to obtain a meaningful solution. A common technique used to solve ill-posed inverse problems is to seek the minimum of a constrained optimization problem:
\begin{align}
\widehat{x} &= \argmin~F(x),~~\label{eq:cop}\\
&\text{s.t.}~~~x \in \mathcal{S},\nonumber
\end{align}
where $F$ is an objective function that typically depends on $y$ and $\mathcal{A}$, and $\mathcal{S} \subseteq \R^n$ captures some sort of \emph{structure} that $x^*$ is assumed to obey. 

Sparsity is a common modeling assumption, particularly in signal and image processing applications, where $\mathcal{S}$ becomes a set of sparse vectors in some (known) basis representation. The popular framework of \emph{compressive sensing} studies the special case where the forward measurement operator $\mathcal{A}$ can be modeled as a linear operator that satisfies certain (restricted) stability properties; when this is the case, accurate estimation of $x^*$ can be performed, assuming that the signal $x^*$ is sufficiently sparse~\cite{candes2006compressive}. 

Parallel to the development of algorithms that leverage sparsity priors, the last decade has witnessed analogous approaches for other families of structural constraints. These include structured sparsity~\cite{modelcs,surveyEATCS}, unions-of-subspaces~\cite{MarcoCISS}, dictionary models~\cite{elad2006image,aharon2006rm}, total variation models~\cite{chan2006total}, analytical transforms~\cite{sairprasad}, among many others. 
 %\cite{bora2017compressed} shows that the sparsity prior can be replaced by the generative Models and the solution to \ref{eq:setup2} can be found simply by Gradient Descent. However, the chances of getting stuck in local minima is high due to the extreme non-convex nature of the generator function involved.  

Lately, there has been renewed interest in prior models that are parametrically defined in terms of a \emph{deep neural network}. We call these \emph{generative network} models. Specifically, we define 
\[
\mathcal{S} = \{x \in \R^n~|~x  = G(z),~z \in \R^k \}
\] 
where $z$ is a $k$-dimensional latent parameter vector and $G$ is parameterized by the weights and biases of a $d$-layer neural network. One way to obtain such a model is to train a generative adversarial network~\cite{goodfellow2014generative}. Generative models have found remarkable applications in image analysis~\cite{zhu2016generative,brock2016neural,chen2016infogan,zhao2016energy}, and a well-trained generative model closely captures the notion of a signal (or image) being `natural'~\cite{berthelot2017began}. Indeed, generative neural network learning algorithms have been successfully employed to solve inverse problems such as image super-resolution and inpainting~\cite{yeh2016semantic, ledig2016photo}. However, most of these approaches are heuristic and provable characterization of such algorithms are not readily available.

% \begin{figure}
% \begin{center}

% \tikzstyle{startstop} = [rectangle, rounded corners, minimum height=1cm, text centered, draw=black, fill=red!30]
% \tikzstyle{io} = [rectangle, rounded corners, minimum height=1cm, text centered, draw=black, fill=blue!30]
% \tikzstyle{process} = [rectangle, rounded corners, minimum height=1cm, text centered, draw=black, fill=orange!30]
% \tikzstyle{decision} = [diamond, minimum height=1cm, text centered, draw=black, fill=green!30]
% \tikzstyle{arrow} = [thick,->,>=stealth]

% \begin{tikzpicture}[node distance=3cm]

% \node (start) [startstop] {Image/signal};
% \node (in1) [io, right of=start] {Forward model};
% \node (pro1) [process, right of=in1] {Observations};
% %\node (dec1) [decision, below of=pro1, yshift=-0.5cm] {Decision 1};
% %\node (pro2a) [process, below of=dec1, yshift=-0.5cm] {Process 2a text text text text text text text text text text};
% %\node (pro2b) [process, right of=dec1, xshift=2cm] {Process 2b};
% %\node (out1) [io, below of=pro2a] {Output};
% %\node (stop) [startstop, below of=out1] {Stop};

% \draw [arrow] (start) -- (in1);
% \draw [arrow] (in1) -- (pro1);
% \end{tikzpicture}
% \end{center}
% \caption{\label{fig:ill} Block diagram for a generic inverse problem. The goal is to reconstruct (an estimate of) the data/signal given knowledge of the observations or measurements.}
% \end{figure}

\subsection{Contributions}
 Our goal in this paper is to take some initial steps towards a principled use of generative  priors for inverse problems by a) proposing and analyzing the well known projected gradient descent (PGD) algorithm for solving~\eqref{eq:cop} for both linear and nonlinear inverse problems;  b) building a general theoretical framework for analyzing performance of such approaches from an {algorithmic} standpoint. Specifically, apart from providing algorithms to solve inverse problems using generative network models, we also wish to understand the {algorithmic} costs involved with such algorithms: how computationally challenging they are, whether they provably succeed, and how to make such models robust.

The starting point of our work is the seminal paper by \cite{bora2017compressed}, who study the benefits of using generative models in the context of compressive sensing. In this paper, the authors pose the estimated target as the solution to a non-convex optimization problem and establish upper bounds on the \emph{statistical} complexity of obtaining a ``good enough'' solution. Specifically, they prove that if the generative network is a mapping $G : \R^k \rightarrow \R^n$ simulated by a $d$-layer neural network with width $\leq n$ and with activation functions obeying certain properties, then $m = O(kd \log n)$ random observations are sufficient to obtain a good enough reconstruction estimate. However, they do not explicitly discuss an \emph{algorithm} to perform such non-convex optimization. Moreover, the authors do not study the \emph{algorithmic} costs of solving the optimization problem, and standard results in non-convex optimization are sufficient to only obtain sub-linear convergence rates. In this work, we make several advances towards understanding the convergence properties of gradient descent (and related algorithms).

First: we establish a projected gradient descent (PGD) algorithm with linear convergence rates for the compressive sensing setup identical to~\cite{bora2017compressed}, and demonstrate its empirical benefits over previous work. This constitutes \textbf{Contribution I} of this paper. 

Second: we generalize this to a much wider range of \emph{nonlinear} inverse problems. Using standard techniques, we propose a generic version of our PGD algorithm named $\eps-PGD$ for solving~\eqref{eq:cop} where $F$ is a smooth and strongly-convex objective function. We analyze this algorithm and prove its linear convergence under certain regimes of its smoothness and strong-convexity parameters. We also provide empirical results for solving nonlinear inverse problems. This forms \textbf{Contribution II} of this paper. 

Third: we address the challenging inverse problem of \emph{phase retrieval}~\cite{candes2015phase,demanet2014stable}. This is similar to the compressed sensing setup described above, except with the additional difficulty that only the magnitude of the observations are available. We prove that a natural variation of PGD coupled with an intermediate phase estimation step converges (locally) linearly to the true solution. This forms \textbf{Contribution III} of this paper.

Fourth: a drawback of \cite{bora2017compressed} (and our contribution I) is the inability to deal with targets that are outside the range of the generative network model. This is not merely an artifact of their analysis; generative networks are rigid in the sense that once they are learned, they are incapable of reproducing any target outside their range. (This is in contrast with other popular parametric models such as sparsity models that exhibit a ``graceful decay'' property in the sense that if the sparsity parameter $s$ is large enough, such models capture all possible points in the target space.) 

%This issue is addressed, and empirically resolved, in the recent work of~\cite{sparsegen} who propose a hybrid model combining both generative networks and sparsity. This leads to a non-convex optimization framework (called \emph{SparseGen}) which the authors theoretically analyze to obtain analogous statistical complexity results. However, here too, the theoretical contribution is primarily statistical and the algorithmic aspects of their setup are not discussed. 
We address this gap, and propose an alternative algorithm using our general framework. We call it Myopic $\varepsilon$-PGD algorithm. It is novel, nonlinear extension of our previous work~\cite{spinisit,spinIT}. Under (fairly) standard assumptions, this algorithm also can be shown to demonstrate linear convergence. This constitutes \textbf{Contribution IV} of this paper. 

In summary: we complement the work of \cite{bora2017compressed} and \cite{sparsegen} by providing PGD based algorithms for solving inverse problems, and algorithmic upper bounds for the corresponding problems studied in those works. Together, our contributions serve as further building blocks towards an algorithmic theory of generative models in both linear and nonlinear inverse problems.

\subsection{Techniques}
\begin{figure}
	\centering
	\def\svgwidth{\columnwidth}
	\input{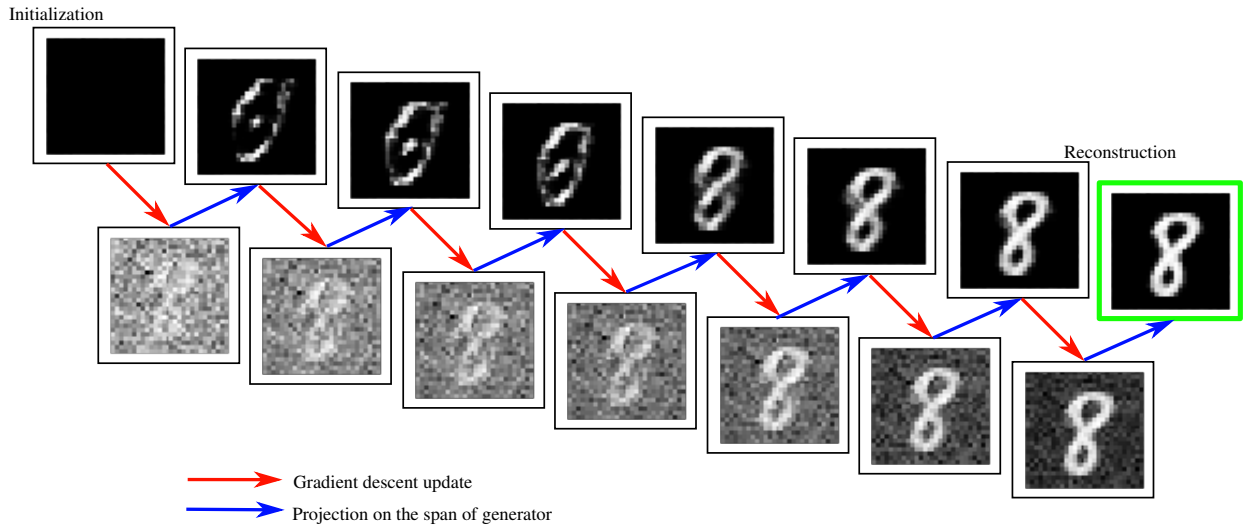}
	\caption{\emph{Illustration of our framework. Starting from a zero vector, we perform a gradient descent update step (red arrow) and projection step (blue arrow) alternatively.}}
	\label{fig:intro1}
\end{figure}

At a high level, our algorithms are standard. The primary novelty is in their applications to generative network models, and some aspects of their theoretical analysis.

Suppose that $G : \R^k \rightarrow \R^n$ is the generative network model under consideration. The cornerstone of our analysis is the assumption of an $\varepsilon$-\emph{approximate} (Euclidean) projection oracle onto the range of $G$. We pre-suppose the availability of a computational routine $P_G$ that, given any vector $x \in \R^n$, can return a vector $x' \in \text{Range}(G)$ that approximately minimizes $\norm{x - x'}_2^2$.  The availability of this oracle, of course, depends on the nature of $G$. Some further comments on how to heuristically approximate this oracle are in Section~\ref{sec:conc}.

For a special case of linear inverse problems (and compressive sensing in particular), we assume such oracle to be simply a gradient descent routine minimizing the $\norm{x - x'}_2^2$ over the latent variable $z$ with $x'=G(z)$. Though this loss function is highly non-convex due to the presence of $G$, we find empirically that the gradient descent (implemented via back-propagation) works very well, and can be used as a projection oracle. Our procedure is depicted in Fig.~\ref{fig:intro1}. We choose a zero vector as our initial estimate ($x_0$), and in each iteration, we update our estimate by following the standard gradient descent update rule (red arrow in Fig.~\ref{fig:intro1}), followed by projection of the output onto the span of generator $(G)$ (blue arrow in Fig.~\ref{fig:intro1}). 

We support this specific PGD algorithm via a rigorous theoretical analysis. We show that the final estimate at the end of $T$ iterations is an approximate reconstruction of the original signal $x^*$, with very small reconstruction error; moreover, under certain sufficiency conditions on the linear operator $\mathcal{A}$, PGD demonstrates linear convergence, meaning that $T = \log(1/\delta)$ is sufficient to achieve $\delta$-accuracy. Further, we present a series of numerical results as  validation of our approach. 

We also provide a direct generalization of the above approach for nonlinear inverse problems, that we call $\eps-PGD$. We analyze this generic algorithm to show a linear convergence by assuming that the objective function in~\eqref{eq:cop} obeys the Restricted Strong Convexity/Smoothness assumptions~\cite{raskutti2010restricted}. With this assumption, the proof of convergence follows from a straightforward modification of the proof given in~\cite{jainkar2017}. Through our analysis, it indeed can be seen that the PGD algorithm for linear inverse problems is in fact a special case of $\eps$-PGD.

The fourth algorithm (Myopic $\varepsilon$-PGD) is a novel approach for handling model mismatch in the target. The main idea (following the lead of \cite{sparsegen}) is to pose the target $x^*$ as the superposition of two components: $x^* = G(z) + \nu$, where $\nu$ can be viewed as an ``innovation'' term that is $s$-sparse in some fixed, known basis $B$. The goal is now to recover both $G(z)$ and $\nu$. This is reminiscent of the problem of source separation or signal demixing~\cite{mccoyTropp2014}, and in our previous work~\cite{spinIT,NLDemix_TSP} we proposed greedy iterative algorithms for solving such demixing problems. We extend this work by proving a nonlinear extension, together with a new analysis, of the algorithm proposed in~\cite{spinIT}.

\section{Background and Related Work}
%We provide a brief overview of related work. 

\subsection{Inverse problems}

The study of solving inverse problems has a long history. As discussed above, the general approach to solve an ill-posed inverse problem is to assumes that the target signal/image obeys a \emph{prior}. Classical methods mainly used hand-crafted signal priors to distinguish `natural' signals from the infinite set of feasible solutions. The prior can be encoded in the form of either a constraint set (as in Eq.~\eqref{eq:cop}) or an extra regularization penalty. Several methods (including \cite{donoho1995noising, xu2010image, dong2011image}) employ sparsity priors to solve \emph{linear} inverse problems such as denoising, super-resolution, and inpainting. %In~\cite{elad2006image,aharon2006rm}, sparse and redundant dictionaries are learned for image denoising, whereas in \cite{rudin1992nonlinear,chambolle2004algorithm,chan2006total}, total variation is used as a regularizer. 
Despite their successful practical and theoretical results, all such hand-designed priors often fail to restrict the solution space only to natural images, and it is easily possible to generate signals satisfying the prior but do not resemble natural data.

\subsection{Neural network models}

The last few years have witnessed the emergence of trained \emph{neural networks} for solving such problems. The main idea is to eschew hand-crafting any priors, and instead \emph{learn} an end-to-end mapping from the measurement space to the image space. This mapping is simulated via a deep neural network, whose weights are learned from a large dataset of input-output training examples  \cite{lecun2015deep}. The works \cite{kulkarni2016reconnet,mousavi2015deep,mousavi2017learning,xu2014deep, dong2016image,kim2016accurate,yeh2017semantic,metzler2020optica} have used this approach to solve several types of inverse problems, and has met with considerable success. However, the major limitations are that a new network has to be trained for each new linear inverse problem; moreover, most of these methods lack concrete theoretical guarantees. An exception of this line of work is the powerful framework of~\cite{rick2017one}, which does \emph{not} require retraining for each new problem; however, this too is not accompanied by theoretical analysis of statistical and computational costs.

\subsection{Generative networks}

A special class of neural networks that attempt to directly model the distribution of the input training samples are known as generative  adversarial training networks, or GANs \cite{goodfellow2014generative}. GANs have been shown to provide visually striking results \cite{arjovsky2017wasserstein,pix2pix,berthelot2017began,simonyan2018}. The use of GANs to solve linear inverse problems was advocated in \cite{bora2017compressed}. Specifically, given (noisy) linear observations $y = Ax^* + e$ of a signal $x^* \in \R^n$, assuming that $x^*$ belongs to the range of a generative network $G : \R^n \rightarrow \R^k$, this approach constructs the reconstructed estimate $\hat{x}$ as follows:
\[
\hat{z} = \arg \min_{z \in \R^k} \norm{y - A G(z)}_2^2,~~\hat{x} = G(\hat{z})
\]
If the observation matrix $A \in \R^{m \times n}$ comprises 
$
m = O(kd \log n)
$
i.i.d. Gaussian measurements, then together with regularity assumptions on the generative network, they prove that the solution $\hat{x}$ satisfies:
$$\norm{x^* - \hat{x}}_2 \leq C \norm{e}_2.$$
for some constant $C$ that can be reliably upper-bounded. In particular, in the absence of noise the recovery of $x^*$ is exact. However, there is no discussion of how computationally expensive this procedure is. Observe that the above minimization is highly non-convex (since for any reasonable neural network, $G$ is a non-convex function) and possibly also non-smooth (if the neural network contains non-smooth activation functions, such as rectified linear units, or ReLUs). More recently,~\cite{deepprior} improve upon the approach in~\cite{bora2017compressed} for solving more general nonlinear inverse problems (in particular, any inverse problem that has a computable derivative). Also, ~\cite{hand2019global} have analyzed the above problem for the \emph{untrained} case where the weights of the generative model $G$ obeys certain randomness assumptions. See, also,~\cite{jagatap2019algorithmic}.

Under similar statistical assumptions as~\cite{bora2017compressed}, the work of~\cite{ganICASSP} provably establishes a linear convergence rate, provided that a projection oracle (on to the range of $G$) is available, but only for the special case of compressive sensing. Our generalized result (Contribution II) extends this algorithm (and analysis) to more general nonlinear inverse problems.
 
More recently, \cite{raj2019gan} proposed a method that learns the a network-based projector for use in the PGD algorithm, making the projection step faster computationally. However, their theoretical result assumes the learned projector to be $\delta-$approximate, indicating that the effective training of the projector is crucial for the success of their method posing an additional challenge.

\subsection{Phase retrieval}
The phase retrieval problem has been extensively studied over the last few decades \cite{gerchberg1972phase,fienup1982phase,candes2013phaselift} and it appears in several applications, including optical imaging \cite{fienup1982phase,holloway2016toward}, microscopy \cite{tian2014multiplexed}, and X-ray crystallography \cite{miao1999extending}. 
Phase retrieval is a non-convex problem and classical solution methods rely on alternating projection heuristics; examples include Gerchberg-Saxton  \cite{gerchberg1972phase} and Fienup \cite{fienup1982phase}. In recent years, lifting-based methods were introduced that reformulate phase retrieval as a semidefinite program.  \cite{candes2013phaselift}. Subsequently, non-convex methods have been proposed for solving phase retrieval problem with theoretical performance guarantees  \cite{bahmani,goldstein2018phasemax,netrapalli,wang2017solving,candes2015codeddiff,copram,structphase}. Most of the non-convex methods rely on estimating a good initial solution via the so-called \textit{spectral initialization} method. A number of methods for solving phase retrieval using trained neural networks have been recently proposed \cite{metzler2020optica,barbastathis2019use,sinha2017lensless}.

\subsection{Model mismatch}

A limitation of most generative network models is that they can only reproduce estimates that are within their range; adding more observations or tweaking algorithmic parameters are completely ineffective if a generative network model is presented with a target that is far away from the range of the model. To resolve this type of model mismatch, the authors of \cite{sparsegen} propose to model the signal $x^*$ as the superposition of two components: a ``base'' signal $u = G(z)$, and an ``innovation'' signal $v = B \nu$, where $B$ is a known ortho-basis and $\nu$ is an $l$-sparse vector.  In the context of compressive sensing, the authors of \cite{sparsegen} solve a sparsity-regularized loss minimization problem:
\[
(\hat{z}, \hat{v}) = \arg \min_{z, v} \norm{B^T v}_1 + \lambda \norm{y - A(G(z) + v)}_2^2 .
\]
and prove that the reconstructed estimate $\hat{x} = G(\hat{z}) + \hat{v}$ is close enough to $x$ provided $m = O((k + l)d \log n)$ measurements are sufficient. However, as before, the algorithmic costs of solving the above problem are not discussed. Our third main result (Contribution III) proposes a new algorithm for dealing with model mismatches in generative network modeling, together with an analysis of its convergence and iteration complexity.

\section{Main Algorithms and Analysis}

Let us first establish some notational conventions. Below, $\norm{\cdot}$ will denote the Euclidean norm unless explicitly specified. We use $O(\cdot)$-notation in several places in order to avoid duplication of constants.  We use $F(\cdot)$ to denote a (scalar) objective function.

 %%%%%%%%%%%% borrowed from icassp18 %%%%%%%%%%%%%%%%
 \subsection{Contribution I: Solving linear inverse problems}
\label{sec:setup}
Let $\S \subseteq \R^n$ be the set of `natural' images in data space with a vector $x^* \in \S$. We consider an ill-posed linear inverse problem \eqref{eq:lip} with the linear operator $\mathcal{A}(x) = Ax$, where $A$ is a Gaussian random matrix. For simplicity, we do not consider the additive noise term. 
\begin{align}
y = Ax^*,~~\label{eq:lip}
\end{align}
To solve for $\widehat{x}$ (estimate of $x^*$), we choose Euclidean measurement error as the loss function $F(\cdot)$ in Eqn. \eqref{eq:cop}. Therefore, given $y$ and $A$, we seek
\begin{align}
\widehat{x} = \argmin_{x \in \S}\|y-Ax\|^2.
\label{eq:setup2}
\end{align} 
\subsubsection{Algorithm}
Our algorithm is described in Alg.~\ref{alg:linear-PGD}. 
We assume that our trained generator network (G) well approximates the high-dimensional probability distribution of the set $\S$. With this assumption, we limit our search for $\widehat{x}$ only to the range of the generator function ($G(z)$). The function $G$ is assumed to be differentiable, and hence we use back-propagation for calculating the gradients of the loss functions involving $G$ for gradient descent updates.

The optimization problem in Eqn. \ref{eq:setup2} is similar to a least squares estimation problem, and a typical approach to solve such problems is to use gradient descent. However, the candidate solutions obtained after each gradient descent update need not represent a `natural' image and may not belong to set $\S$. We solve this limitation by projecting the candidate solution on the range of the generator function after each gradient descent update. %Just as traditional Compressive sensing setting uses the fact that all natural images are sparse in some known basis, here we employ the fact that for all the natural images or their approximates are present in the range of well-trained Generator function (G). 

Thus, in each iteration of our proposed algorithm \ref{alg:linear-PGD}, two steps are performed in alternation: a gradient descent update step and a projection step. 
The first step is simply an application of a gradient descent update rule on the loss function $F(\cdot)$ with the learning rate $\eta$.
In projection step, we minimize the projection loss by gradient descent updates with learning rate $\eta_{in}$: 
\[
P_G\left(w_t\right) \coloneqq G\left(\argmin_{z}\|w_t - G(z)\|\right),
\]
Though the projection loss function is highly non-convex due to the presence of $G$, we find empirically that the gradient descent (implemented via back-propagation) works very well. Thus, the gradient descent based minimization serves as a projection oracle in this case. % in the task of minimizing the loss by searching for $\widehat{z}$ such that $G(\widehat{z})$ is a good approximation of the vector $w_t$. 
In each of the $T$ iterations, we run $T_{in}$ gradient descent updates for calculating the projection. Therefore, $T \times T_{in}$ is the total number of gradient descent updates required in our approach.

\begin{algorithm}[t]
	\caption{\textsc{PGD}}
	\label{alg:linear-PGD}
	\begin{algorithmic}[1]
	\State \textbf{Inputs:} $y$, $A$, $G$, $T$, \textbf{Output:}  $\widehat{x}$
	\State $x_0 \leftarrow \textbf{0}$ \hspace{17.8em} % $\triangleright$  \textbf{Initialization}
	\While {$t < T$}
	\State $w_t \leftarrow x_t + \eta A^T(y-Ax_t)$ \hspace{8.8em} %$\triangleright$ % \textbf{Gradient Descent update}
	\State $x_{t+1} \leftarrow G\left(\argmin_{z}\|w_t - G(z)\|\right)$ \hspace{0.6em} % $\triangleright$  \textbf{Projection}
		%\State estimate $z_l$ from $u = \exp(jtz_l)$ using 
	\State $t \leftarrow t+1$
	\EndWhile
	\State $\widehat{x} \leftarrow x_{T}$
	\end{algorithmic}
\end{algorithm}
 \subsubsection{Analysis}
Drawing parallels with standard compressive sensing theory, in our case, we need to ensure that the difference vector of any two signals in the set $\S$ lies away from the nullspace of the matrix $A$. This condition is encoded via the \SREC~(Set Restricted Eigenvalue Condition) as defined and established in \cite{bora2017compressed}. We slightly modify this condition and present it in the form of squared $\ell_2$-norm : 
\begin{definition}
	Let $\S \in \R^n$. $A$ is $m \times n$ matrix. For parameters $\gamma > 0,~\delta \geq 0$, matrix $A$ is said to satisfy the \SREC$(\S, \gamma, \delta)$ if,
	\[
	\|A(x_1-x_2)\|^2 \geq \gamma \|x_1-x_2\|^2 - \delta,
	\]
	for $\forall x_1,x_2 \in \S$.
\end{definition}
Further, based on \cite{shah2011iterative,foucart2013}, we propose the following theorem about the convergence of our algorithm:
\begin{theorem}
Let $G: \R^k \rightarrow \R^n$ be a differentiable generator with $d$ layers and range $\S$. Let $A$ be a random Gaussian matrix with $A_{i,j}\sim N(0,1/m)$ with $m \geq C (k d \log n)$ for some positive constant $C$.
%such that it satisfies the \SREC$(\S, \gamma, \delta)$ with probability $1-p$, and has $\|Av\| \leq \rho\|v\|$ for every $v \in \R^n$ with probability $1-q$ with $\rho^2 \leq \gamma$.  %Let $A$ be a random Gaussian matrix with $A_{i,j}\sim N(0,1/m)$. 
Then, for every vector $x^* \in \S$, the sequence $\left(x_t\right)$ defined by the algorithm {PGD} [\ref{alg:linear-PGD}] exhibits linear convergence for a carefully chosen range of stepsizes $\eta$.
\end{theorem}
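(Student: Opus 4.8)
The plan is to run the standard projected-gradient contraction argument, using the optimality of the projection $P_G$ together with a two-sided strengthening of the \SREC\ to show that each iteration shrinks the distance to $x^*$ by a constant factor strictly less than one.

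First I would record the one-step identity. Since $y = Ax^*$, the gradient step gives $w_t - x^* = (I - \eta A^T A)(x_t - x^*)$. Because $x^* \in \S$ and $x_{t+1}$ is the (Euclidean) projection of $w_t$ onto $\S = \mathrm{Range}(G)$, optimality of the projection yields $\|x_{t+1} - w_t\| \le \|x^* - w_t\|$. Expanding both squares and cancelling $\|w_t\|^2$ gives the basic inequality
\[
\|x_{t+1} - x^*\|^2 \le 2\langle x_{t+1} - x^*,\, w_t - x^*\rangle = 2\langle x_{t+1} - x^*,\, (I - \eta A^T A)(x_t - x^*)\rangle .
\]
Writing $u = x_{t+1} - x^*$ and $v = x_t - x^*$, note that $u$ and $v$ are both differences of points in $\S$, so the entire problem reduces to controlling the bilinear form $\langle u,\, (I - \eta A^T A) v\rangle$ over the difference set $\S - \S$.

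The crux is to show $|\langle u,\, (I - \eta A^T A) v\rangle| \le \rho\,\|u\|\,\|v\|$ with $\rho < 1/2$. For this I would upgrade the \SREC\ to a two-sided bound: with $m = O(kd\log n)$ Gaussian measurements, the standard covering-number estimates for the range of a $d$-layer network (exactly as in \cite{bora2017compressed}) give, with high probability, $\gamma\|w\|^2 \le \|Aw\|^2 \le \Gamma\|w\|^2$ simultaneously for all $w$ in a low-complexity set containing $\S - \S$ and the combinations $u \pm v$. Choosing the stepsize $\eta$ in the band dictated by $\gamma$ and $\Gamma$ (roughly $\eta \approx 2/(\gamma + \Gamma)$) forces the restricted spectrum of $I - \eta A^T A$ into an interval of half-width $\rho \approx (\Gamma - \gamma)/(\Gamma + \gamma)$ about zero; a polarization argument then bounds the bilinear form by $\rho\,\|u\|\,\|v\|$. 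Substituting back yields $\|x_{t+1} - x^*\| \le 2\rho\,\|x_t - x^*\|$, and provided the restricted condition number $\Gamma/\gamma$ is small enough (guaranteed once the constant $C$ in $m \ge C\,kd\log n$ is large enough) we obtain $q := 2\rho < 1$, hence $\|x_T - x^*\| \le q^{T}\|x_0 - x^*\|$, i.e. linear convergence. This also pins down the ``carefully chosen range of stepsizes'' in the statement as precisely the admissible band for $\eta$ determined by $(\gamma,\Gamma)$.

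The main obstacle is the step controlling the cross term, and in particular ensuring that the two-sided eigenvalue bound holds over a set rich enough to contain the combinations $u \pm v$ (not merely single differences $\S - \S$) while keeping the sample complexity at $O(kd\log n)$; this is where the covering argument for the generator range, together with the careful coupling of the admissible $\eta$ to $(\gamma,\Gamma)$, does the real work. A secondary technical point is that the inner gradient-descent loop produces only an $\varepsilon$-approximate projection, so the projection inequality picks up an additive slack; I would carry this term through the contraction, which changes the conclusion from exact recovery to convergence into an $O(\varepsilon)$-neighborhood of $x^*$ at the same linear rate.
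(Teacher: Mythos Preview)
Your argument is correct, but it is not the route the paper takes. The paper works in function value: it shows $F(x_{t+1}) \le \left(\tfrac{1}{\eta\gamma}-1\right)F(x_t)+O(\delta)$ by combining the identity for $F(x_{t+1})-F(x_t)$ with the projection inequality, then invoking the one-sided \SREC\ lower bound $\|A(x^*-x_t)\|^2 \ge \gamma\|x^*-x_t\|^2-\delta$ together with a restricted upper bound $\|A(x_{t+1}-x_t)\|^2\le\rho^2\|x_{t+1}-x_t\|^2$, and choosing $\eta\in(\tfrac{1}{2\gamma},\tfrac{1}{\gamma})$. Your approach instead works directly in iterate distance, reducing the problem to a bilinear bound $|\langle u,(I-\eta A^TA)v\rangle|\le\rho\|u\|\|v\|$ via polarization over the enlarged set containing $u\pm v$. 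This is the classical IHT-style analysis and is arguably cleaner, giving contraction in $\|x_t-x^*\|$ without the additive $\delta$ slack of the paper's \SREC; the price is that you need the two-sided eigenvalue bound to hold over four-fold combinations of $\S$ rather than just $\S-\S$, which (as you note) inflates the covering number by a constant power and is absorbed into $C$. Your treatment of the inexact projection as producing an additive $O(\varepsilon)$ term is also consistent with how the paper handles slack in its more general $\varepsilon$-PGD analysis.
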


\begin{proof}
	Suppose $F(\cdot)$ is the squared error loss function as defined above. Then, we have:
	\begin{align*}
	& F(x_{t+1}) - F(x_t) \\
%	& = \| y - Ax_{t+1}\|^2 -  \| y- Ax_t\|^2, \\
%	& = [\|y\|^2 - 2\langle y, Ax_{t+1} \rangle + \|Ax_{t+1}\|^2] \\
%	&~~~~~- [\|y\|^2 - 2\langle y, Ax_t \rangle + \|Ax_t\|^2], \\
	& = \|Ax_{t+1}\|^2 - 2\langle y, Ax_{t+1} \rangle + 2\langle y, Ax_t \rangle - \|Ax_t\|^2, \\
%   & = \|Ax_{t+1} - Ax_t\|^2 + 2\langle Ax_t, Ax_{t+1} \rangle \\
%	&~~~~~~+ 2\langle y, Ax_t - Ax_{t+1} \rangle -2 \|Ax_t\|^2, \\	
	&= \|Ax_{t+1} - Ax_t\|^2 + 2\langle x_t-x_{t+1}, A^{T}A(x^*-x_t) \rangle.
	\end{align*}
	Substituting $y = Ax^*$ and rearranging yields,
	\begin{align}
	2\langle x_t - x_{t+1}, A^T(y-Ax_t) \rangle & = F(x_{t+1}) - F(x_t) \nonumber \\
	& - \|Ax_{t+1} - Ax_t\|^2.
	\label{eq:prf1}
	\end{align}
	Define:
	\begin{align*}
		w_t \coloneqq x_t + \eta A^T(y-Ax_t) = x_t + \eta A^TA(x^*-x_t)
	\end{align*}
	Then, by definition of the projection operator $P_G$, the vector $x_{t+1}$ is a better (or equally good) approximation to $w$
	as the true image $x^*$. Therefore, we have:
	\begin{align*}
	\|x_{t+1} - w_t\|^2 \leq \|x^* - w_t\|^2.
	\end{align*}
	Substituting for $w_t$ and expanding both sides, we get:
	\begin{align*}
%	& \|x_{t+1} - x_t - \eta A^T(y-Ax_t) \|^2 \\
%	& \leq \|x^* - x_t - \eta A^T(y-Ax_t)\|^2,~~~\text{or} \\
	& \|x_{t+1} - x_t\|^2 - 2\eta \langle x_{t+1}-x_t, A^T(y-Ax_t) \rangle \\
	& \leq \|x^* - x_t\|^2 - 2\eta \langle x^*-x_t, A^T(y-Ax_t) \rangle.
	\end{align*}
	Substituting $y = Ax^*$ and rearranging yields,
	\begin{align}
	& 2\langle x_t - x_{t+1}, A^T(y-Ax_t) \rangle \nonumber \\
	& \leq \frac{1}{\eta}\|x^* - x_t\|^2 - \frac{1}{\eta}\|x_{t+1} - x_t\|^2 - 2F(x_t).
	\label{eq:prf2}
	\end{align}
	We now use \ref{eq:prf1} and \ref{eq:prf2} to obtain,
%	\begin{align*}
%	& F(x_{t+1}) - F(x_t) - \|Ax_{t+1} - Ax_t\|^2 \\ 
%	& \leq \frac{1}{\eta}\|x^* - x_t\|^2 - \frac{1}{\eta}\|x_{t+1} - x_t\|^2 - 2F(x_t),~~\text{or}
%	\end{align*}
	\begin{align}
	& F(x_{t+1}) \leq \frac{1}{\eta}\|x^* - x_t\|^2 - F(x_t) \nonumber \\
	& - \left( \frac{1}{\eta}\|x_{t+1} - x_t\|^2 - \|Ax_{t+1} - Ax_t\|^2 \right).
	\label{eq:prf3}
	\end{align}
	Now, since $A$ is a random Gaussian with sufficiently many rows, it satisfies the \SREC~\cite{bora2017compressed}:  
	$$\|A (x_1 - x_2)\|^2 \geq \gamma \|x_1 - x_2 \|^2 - \delta.$$ %\todo{Add appendix with Verification of the SREC to be true for `squared' norm}
	As $x^*, x_t$ and $x_{t+1}$ are `natural' vectors,
	\begin{align}
	\frac{1}{\eta} \|x^* -x_t\|^2 \leq \frac{1}{\eta \gamma}\|y-Ax_t\|^2 + \frac{\delta}{\eta \gamma}.
	\label{eq:prf4}
	\end{align}
	Substituting \ref{eq:prf4} in \ref{eq:prf3},
	\begin{align*}
	& F(x_{t+1}) \leq \left(\frac{1}{\eta \gamma} -1\right) F(x_t)\\
	&~~~~~~~-\left( \frac{1}{\eta}\|x_{t+1} - x_t\|^2 - \|Ax_{t+1} - Ax_t\|^2 \right) + \frac{\delta}{\eta \gamma}.
	\end{align*}
	
	Morever, as shown in~\cite{bora2017compressed}, we have $\|Av\| \leq \rho \|v\|$ with high probability for any $v$. Therefore, we write:
	$$\|Ax_{t+1} - Ax_t\|^2 \leq \rho^2\|x_{t+1} - x_t\|^2,$$
	$$ \|Ax_{t+1} - Ax_t\|^2 - \frac{1}{\eta}\|x_{t+1} - x_t\|^2 \leq \left(\rho^2 - \frac{1}{\eta}\right)\|x_{t+1} - x_t\|^2.$$
	Let us choose learning rate$(\eta)$ such that $\frac{1}{2\gamma}<\eta < \frac{1}{\gamma}$. We also have $\rho^2 \leq \gamma$. Combining both, we get $\rho^2 < \frac{1}{\eta}$, which makes the L.H.S. in the above equation negative. Therefore,
	$$F(x_{t+1}) \leq \left(\frac{1}{\eta \gamma} -1\right) F(x_t) + \frac{\delta}{\eta \gamma},$$
	where $\delta$ is inversely proportional to the number of measurements $m$ \cite{bora2017compressed}. If the slack parameter $\delta$ is small enough then it can be ignored. Also, $\frac{1}{2\gamma}<\eta < \frac{1}{\gamma}$ yields,
	$$0<\left(\frac{1}{\eta \gamma} -1\right) < 1.$$
	Hence,
	\begin{align}
	F(x_{t+1}) \leq \alpha F(x_t) + \delta; ~0< \alpha < 1,
	\label{eq:prf5}
	\end{align}
    i.e., the sequence $F(x_t)$ converges linearly upto a neighborhood of radius $O(\delta)$. In the noiseless case, this also implies that $x_{t+1}$ converges to a small neighborhood around $x^*$.
\end{proof}

 %%%%%%%%%%%%%%%%%%%%%%%%%%%%%%%%%%%%%%%%%%%%%%%%%%%%

\subsection{Contribution II: Solving nonlinear inverse problems}
\label{sec:nonlinear}
We now present the generic version of PGD algorithm suitable for large class of nonlinear inverse problems by generalizing for the loss function $F(\cdot)$ and the projection oracle $P_G$. 

We denote the $F(\cdot)$ to be a scalar function with continuous gradient, and assume that $F$ has a continuous gradient $\nabla F = \left(\frac{\partial F}{\partial x_i}\right)_{i=1}^n$ which can be evaluated at any point $x \in \R^n$. 

Recall that we wish to solve the problem
\begin{align}
\widehat{x} &= \argmin~F(x),~~\label{eq:cop0}\\
&\text{s. t.}~~~x \in \text{Range}(G),\nonumber
\end{align}
where $G$ is a generative network. To do so, we now employ a generalized version of \emph{projected gradient descent} algorithm using the $\varepsilon$-approximate projection oracle for $G$. The algorithm is described in Alg. \ref{alg:PGD}.

We define the $\varepsilon$-approximate projection oracle $P_G$ as, 

\begin{definition}[Approximate projection] A function $P_G : \R^n \rightarrow \text{Range}(G)$ is an $\varepsilon$-approximate projection oracle if for all $x \in \R^n$, $P_G(x)$ obeys:
\[
\norm{x - P_G(x)}^2_2 \leq \min_{z \in \mathbb{R}^k} \norm{x - G(z)}^2_2 + \varepsilon .
\]

We will assume that for any given generative network $G$ of interest, such a function $P_G$ exists and is computationally tractable\footnote{This may be a very strong assumption, but at the moment we do not know how to relax this in the very general case. Indeed, the computational complexity of our proposed algorithms is proportional to the complexity of such a projection oracle. For preliminary advances, see~\cite{lei2019inverting}}. Here, $\varepsilon > 0$ is a parameter that is known \emph{a priori}. 
\end{definition}

In contrast to our previous analysis, here we introduce more general restriction conditions on the $F(\cdot)$:
\begin{definition}[Restricted Strong Convexity/Smoothness]
Assume that $F$ satisfies $\forall x, y \in S$:
\[
\frac{\alpha}{2} \norm{x - y}_2^2 \leq F(y) - F(x) - \langle \nabla F(x), y - x \rangle \leq \frac{\beta}{2} \norm{x - y}_2^2 .
\]
for positive constants $\alpha, \beta$.
\end{definition}
\medskip
This assumption is by now standard; see~\cite{raskutti2010restricted,jainkar2017} for in-depth discussions. This means that the objective function is strongly convex / strongly smooth along certain directions in the parameter space (in particular, those restricted to the set $S$ of interest). The parameter $\alpha > 0$ is called the restricted strong convexity (RSC) constant, while the parameter $\beta > 0$ is called the restricted strong smoothness (RSS) constant. Clearly, $\beta \geq \alpha$. In fact, throughout the paper, we assume that 
$
1 \leq \frac{\beta}{\alpha} < 2,
$
which is a fairly stringent assumption but again, one that we do not know at the moment how to relax.

\begin{definition}[Incoherence]
A basis $B$ and $\text{Range}(G)$ are called $\mu$-incoherent if for all $u, u' \in \text{Range}(G)$ and all $v, v' \in \text{Span}(B)$, we have:
\[
| \langle u - u', v - v' \rangle | \leq \mu \norm{u - u'}_2 \norm{v - v'}_2.
\]
for some parameter $0 < \mu< 1$.
\end{definition}

\begin{remark}%{Other assumptions}
In addition to the above, we will make the following assumptions in order to aid the analysis. Below, $\gamma$ and $\Delta$ are positive constants.
\begin{itemize}
%\item $F(x^*) \leq F(x)$
\item $\norm{\nabla F(x^*)}_2 \leq \gamma$.
\item $\text{diam}(\text{Range}(G)) = \Delta$.
\item $\gamma \Delta \leq O(\varepsilon)$. 
\end{itemize}
Some comments about these assumptions may be warranted. The first says that the gradient of the loss at \emph{at the minimizer} is small. The second says that the range of $G$ is compact. The third links the parameters from the first two assumptions.
\end{remark}

\begin{algorithm}[t]
	\caption{$\varepsilon$-\textsc{PGD}}
	\label{alg:PGD}
	\begin{algorithmic}[1]
	\State \textbf{Inputs:} $y$, $T$, $\nabla$; \textbf{Output:}  $\widehat{x}$
	\State $x_0 \leftarrow \textbf{0}$ \hspace{17.8em} % $\triangleright$  \textbf{Initialization}
	\While {$t < T$}
	\State $w_t \leftarrow x_t - \eta \nabla F(x_t)$ \hspace{8.8em} %$\triangleright$ % \textbf{Gradient Descent update}
	\State $x_{t+1} \leftarrow P_G(w_t) $ \hspace{0.6em} % $\triangleright$  \textbf{Projection}
		%\State estimate $z_l$ from $u = \exp(jtz_l)$ using 
	\State $t \leftarrow t+1$
	\EndWhile
	\State $\widehat{x} \leftarrow x_{T}$
	\end{algorithmic}
\end{algorithm}

  We obtain the following theoretical result:

\begin{theorem}
\label{thm:pgd}
If $F$ satisfies RSC/RSS over $\text{Range}(G)$ with constants $\alpha$ and $\beta$, then $\varepsilon$-PGD (Alg.\ \ref{alg:PGD}) convergences linearly up to a ball of radius $O(\gamma \Delta) \approx O(\varepsilon)$. 
\[
F(x_{t+1}) - F(x^*) \leq \left( \frac{\beta}{\alpha} - 1\right) (F(x_t) - F(x^*)) + O(\varepsilon) \, .
\]
\end{theorem}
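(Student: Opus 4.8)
The plan is to adapt the standard projected-gradient analysis under RSC/RSS (following~\cite{jainkar2017}), with two twists specific to this setting: the projection is only $\varepsilon$-approximate, and the target $x^*$ need not be an exact stationary point of $F$. The skeleton is a single-step recursion on the objective gap $F(x_t) - F(x^*)$, which I then unroll.

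First I would exploit the defining inequality of the approximate oracle. Since $x^* \in \text{Range}(G)$ and $x_{t+1} = P_G(w_t)$, the $\varepsilon$-approximate projection bound gives $\|x_{t+1} - w_t\|^2 \leq \|x^* - w_t\|^2 + \varepsilon$. Substituting $w_t = x_t - \eta \nabla F(x_t)$, expanding both squared norms, and cancelling the common $\eta^2 \|\nabla F(x_t)\|^2$ term yields, after rearrangement, a one-sided inner-product estimate:
\[
2\eta \langle x_{t+1} - x^*, \nabla F(x_t) \rangle \leq \|x^* - x_t\|^2 - \|x_{t+1} - x_t\|^2 + \varepsilon .
\]
Next I would bring in RSC/RSS to pass from this geometric inequality to a descent inequality on $F$. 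Applying the RSS upper bound to the pair $(x_t, x_{t+1})$ and the RSC lower bound to the pair $(x_t, x^*)$ and subtracting gives
\[
F(x_{t+1}) - F(x^*) \leq \langle \nabla F(x_t), x_{t+1} - x^* \rangle + \frac{\beta}{2}\|x_{t+1} - x_t\|^2 - \frac{\alpha}{2}\|x^* - x_t\|^2 .
\]
Substituting the inner-product estimate above and choosing the stepsize $\eta = 1/\beta$ makes the coefficient of $\|x_{t+1} - x_t\|^2$ vanish, leaving the clean intermediate bound $F(x_{t+1}) - F(x^*) \leq \frac{\beta - \alpha}{2}\|x^* - x_t\|^2 + \frac{\beta}{2}\varepsilon$.

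Finally I would convert the distance $\|x^* - x_t\|^2$ back into an objective gap to obtain a genuine recursion. This is where the approximate-minimizer assumptions enter: applying RSC to the pair $(x^*, x_t)$ and controlling the stray term $\langle \nabla F(x^*), x_t - x^* \rangle$ by Cauchy--Schwarz together with $\|\nabla F(x^*)\| \leq \gamma$ and $\|x_t - x^*\| \leq \Delta$ yields $\frac{\alpha}{2}\|x^* - x_t\|^2 \leq F(x_t) - F(x^*) + \gamma \Delta$. Plugging this in and absorbing the residual additive terms $\gamma\Delta$ and $\varepsilon$ into a single $O(\varepsilon)$ floor (using $\gamma\Delta \leq O(\varepsilon)$) produces exactly the claimed recursion with contraction factor $\frac{\beta}{\alpha} - 1$, which is strictly below $1$ precisely because of the standing assumption $\beta/\alpha < 2$; unrolling the recursion then gives linear convergence down to an $O(\varepsilon)$ ball.

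The main obstacle is this last conversion step. Because $x^*$ is \emph{not} an exact minimizer of $F$, one cannot invoke $\nabla F(x^*) = 0$ to turn a bound on $\|x^* - x_t\|$ into a bound on the objective gap for free; the nonvanishing gradient produces a cross term that, if left unchecked, would prevent the additive error from being bounded by a fixed $O(\varepsilon)$. The entire purpose of the auxiliary assumptions (bounded gradient $\gamma$ at $x^*$, compact range of diameter $\Delta$, and the coupling $\gamma\Delta \leq O(\varepsilon)$) is to confine this residual so that it merges into the $\varepsilon$-floor arising from the inexact projection, rather than degrading the geometric contraction itself. The remaining steps are routine algebraic manipulations of the RSC/RSS inequalities and a standard geometric-series summation.
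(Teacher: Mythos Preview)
Your proposal is correct and follows essentially the same route as the paper: RSS on $(x_t,x_{t+1})$, RSC on $(x_t,x^*)$, the $\varepsilon$-approximate projection inequality, the stepsize choice $\eta=1/\beta$ to kill the $\|x_{t+1}-x_t\|^2$ term, and a second application of RSC at $x^*$ together with Cauchy--Schwarz and the $\gamma\Delta$ bound to convert $\|x^*-x_t\|^2$ into the objective gap. The only cosmetic difference is ordering: the paper first completes the square to write $F(x_{t+1})-F(x_t)\le \tfrac{\beta}{2}(\|x_{t+1}-w_t\|^2-\|x_t-w_t\|^2)$ and then invokes the projection bound, whereas you expand the projection inequality first and then combine with RSS/RSC; both arrive at the identical intermediate estimate $F(x_{t+1})-F(x^*)\le \tfrac{\beta-\alpha}{2}\|x^*-x_t\|^2+\tfrac{\beta\varepsilon}{2}$.
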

\begin{proof}
The proof is a minor modification of that in \cite{jainkar2017}. For simplicity we will assume that $\norm{\cdot}$ refers to the Euclidean norm. Let us suppose that the step size $\eta = \frac{1}{\beta}$. Define
\[
w_t = x_t - \eta \nabla F(x_t) .
\]
By invoking RSS, we get:
\begin{align*}
& F(x_{t+1}) - F(x_t) \\
& \leq \langle \nabla F(x_t) , x_{t+1} - x_t \rangle + \frac{\beta}{2} \norm{x_{t+1} - x_t}^2 \\
&=  \frac{1}{\eta} \langle x_t - w_t, x_{t+1} - x_t \rangle + \frac{\beta}{2} \norm{x_{t+1} - x_t}^2 \\
&= \frac{\beta}{2} \left( \norm{x_{t+1} - x_t}^2 + 2 \langle x_t - w_t,  x_{t+1} - x_t \rangle + \norm{x_t - w_t}^2 \right) \\
&~~~- \frac{\beta}{2} \norm{x_t - w_t}^2 \\
&= \frac{\beta}{2} \left(\norm{x_{t+1} - w_t}^2 - \norm{x_t - w_t}^2 \right),
\end{align*}
where the last few steps are consequences of straightforward algebraic manipulation. 

Now, since $x_{t+1}$ is an $\varepsilon$-approximate projection of $w_t$ onto $\text{Range}(G)$ and $x^* \in \text{Range}(G)$, we have:
\[
\norm{x_{t+1} - w_t}^2 \leq \norm{x^* - w_t}^2 + \varepsilon.
\]
Therefore, we get:
\begin{align*}
& F(x_{t+1}) - F(x_t) \\
&\leq \frac{\beta}{2} \left(\norm{x^* - w_t}^2 - \norm{x_t - w_t}^2 \right) + \frac{\beta \varepsilon}{2} \\
&=\frac{\beta}{2} \left(\norm{x^* - x_t + \eta \nabla F(x_t)}^2 - \norm{\eta \nabla F(x_t)}^2 \right) + \frac{\beta \varepsilon}{2} \\
&= \frac{\beta}{2} \left( \norm{x^* - x_t}^2 + 2 \eta \langle x^* - x_t, \nabla F(x_t) \rangle \right) + \frac{\beta \varepsilon}{2} \\
&= \frac{\beta}{2} \norm{x^* - x_t}^2 + \langle x^* - x_t, \nabla F(x_t) \rangle + \frac{\beta \varepsilon}{2}.
\end{align*}
However, due to RSC, we have:
\begin{align*}
\frac{\alpha}{2} \norm{x^* - x_t}^2 &\leq F(x^*) - F(x_t) - \langle x^* - x_t, \nabla F(x_t) \rangle, \\
\langle x^* - x_t, \nabla F(x_t) \rangle &\leq F(x^*) - F(x_t) - \frac{\alpha}{2} \norm{x^* - x_t}^2 .
%&\leq F(x^*) - F(x_t) + \norm{x^* - x_t} \norm{\nabla F(x_t)} \\
%&\leq F(x^*) - F(x_t) +  \gamma \Delta.
\end{align*}
Therefore,
\begin{align*}
& F(x_{t+1}) - F(x_t) \\
&\leq \frac{\beta - \alpha}{2} \norm{x^* - x_t}^2 + F(x^*) - F(x_t) + \frac{\beta \varepsilon}{2} \\
&\leq \frac{\beta - \alpha}{2} \cdot \frac{2}{\alpha} \left( F(x_t) - F(x^*) - \langle x_t - x^*, \nabla F(x^*) \rangle \right)  \\
& ~~~~+ F(x^*) - F(x_t) + \frac{\beta \varepsilon}{2} \\
&\leq \left(2 - \frac{\beta}{\alpha}\right) \left( F(x^*) - F(x_t) \right) + \frac{\beta - \alpha}{\alpha} \gamma \Delta + \frac{\beta \varepsilon}{2} ,
\end{align*}
where the last inequality follows from Cauchy-Schwartz and the assumptions on $\norm{\nabla F(x^*)}$ and the diameter of $\text{Range}(G)$.  Further, by assumption, $\gamma \Delta \leq O(\varepsilon)$. Rearranging terms, we get:
\[
F(x_{t+1}) - F(x^*) \leq \left(\frac{\beta}{\alpha} - 1\right) \left( F(x_t) - F(x^*) \right) + C \varepsilon .
\]
for some constant $C > 0$.
\end{proof}

This theorem asserts that the distance between the objective function at any iteration to the optimum \emph{decreases by a constant factor} in every iteration. (The decay factor is $\frac{\beta}{\alpha} - 1$, which by assumption is a number between 0 and 1). Therefore, we immediately obtain linear convergence of $\varepsilon$-PGD up to a ball of radius $O(\varepsilon)$:

\begin{corollary}
After $T = O(\log \frac{F(x_0) - F(x^*)}{\varepsilon})$ iterations, $F(x_T) \leq F(x^*) + O(\varepsilon$) .
\end{corollary}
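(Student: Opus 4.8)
The plan is to treat Theorem~\ref{thm:pgd} as a one-step contraction and unroll it into a geometric recursion. Writing $e_t := F(x_t) - F(x^*)$ and $\kappa := \frac{\beta}{\alpha} - 1$, the theorem states precisely that
\[
e_{t+1} \leq \kappa\, e_t + C\varepsilon
\]
for some constant $C > 0$. The assumption $1 \leq \frac{\beta}{\alpha} < 2$ guarantees $0 < \kappa < 1$, so this is a genuine contraction with an additive error floor of size $O(\varepsilon)$.

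First I would iterate the recursion from $t = 0$ to $t = T-1$ to obtain
\[
e_T \leq \kappa^T e_0 + C\varepsilon \sum_{j=0}^{T-1} \kappa^j \leq \kappa^T e_0 + \frac{C\varepsilon}{1 - \kappa},
\]
where the geometric sum is bounded by its infinite-series value since $\kappa < 1$. Because $\kappa$ is a fixed constant bounded away from $1$, the second term is already $O(\varepsilon)$ and lies within the claimed accuracy, requiring no further work.

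Next I would drive the first term below the error floor. Requiring $\kappa^T e_0 \leq \varepsilon$ is equivalent to $T \geq \frac{\log(e_0/\varepsilon)}{\log(1/\kappa)}$, and since $\log(1/\kappa)$ is a positive constant, the choice
\[
T = O\!\left(\log \frac{F(x_0) - F(x^*)}{\varepsilon}\right)
\]
suffices. Substituting back gives $e_T \leq \varepsilon + \frac{C\varepsilon}{1-\kappa} = O(\varepsilon)$, i.e.\ $F(x_T) \leq F(x^*) + O(\varepsilon)$, as claimed.

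This is an entirely routine argument, so there is no serious obstacle; the only point requiring care is the bookkeeping of the constants hidden in the $O(\varepsilon)$ notation. The essential ingredient is that $\kappa = \frac{\beta}{\alpha} - 1$ is strictly less than $1$ — this is exactly where the stringent assumption $\frac{\beta}{\alpha} < 2$ enters, since it simultaneously forces $\kappa^T$ to decay geometrically and keeps $\frac{1}{1-\kappa}$ a finite constant, so that the accumulated additive errors sum to $O(\varepsilon)$ rather than blowing up.
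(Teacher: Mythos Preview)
Your proposal is correct and follows exactly the standard unrolling of the one-step contraction in Theorem~\ref{thm:pgd}; the paper itself does not spell out a separate proof for the corollary but simply states that linear convergence ``immediately'' follows, so your argument is precisely the routine computation the paper leaves implicit. The only nitpick is that the paper's assumption $1 \leq \frac{\beta}{\alpha} < 2$ allows $\kappa = 0$, not just $0 < \kappa < 1$, but in that degenerate case the bound is reached in a single step and the conclusion is trivially stronger.
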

%\proof
%Follows by induction on $t$.
%\endproof

Therefore, the overall running time can be bounded as follows:
\[
\text{Runtime} \leq (T_{\varepsilon-\textsc{Proj}} + T_\nabla) \times \log (1 / \varepsilon ) .
\]

It is noticeable that the analysis of PGD algorithm for linear problem is a special case of the generalized analysis given by Theorem~\ref{thm:pgd}. That is because once we set the $F(\cdot)$ as defined in Eq.~\eqref{eq:setup2}, the RSC for $F(\cdot)$ can be obtained through the \SREC~condition from Eq.~\eqref{eq:prf1}. Similarly, we can use the upper bound of the spectral norm for the Gaussian matrix $A$ to obtain RSS for $F(\cdot)$. 

In Sec.~\ref{sec:exp}, we provide empirical results for solving nonlinear inverse problems using Alg.~\ref{alg:PGD}. Specifically, we consider two nonlinear forward models: a sinusoidal model with $\mathcal{A}(x^*) = Ax^* + sin(Ax^*)$; and a sigmoid model with $\mathcal{A}(x^*) = \text{sigmoid}(Ax^*) = \frac{1}{1+\exp(-Ax^*)}$. While we use the $L_2$-loss as a loss function in the case of sinusoidal model, for the sigmoid nonlinearity, we use a loss function specified as:
$$
\ F(x) = \frac{1}{m}\sum _{i=1}^m \left(\Theta (a_i^Tx) - y_i a_i^Tx\right),
$$
where, $\Theta(\cdot)$ is integral of $\mathcal{A}(\cdot)$, and $a_i$ represents the rows of the measurement matrix $A$. The gradient of the loss can be calculated in closed form:
\begin{equation} 
\nabla F(x^*) = \frac{1}{m} A^T (\text{sigmoid}(Ax) - y). 
\label{eq:sig_loss}
\end{equation}
Such choice of the loss function is inspired by the problem of single-index model (SIM) estimation \cite{negahban2012unified}. \cite{soltani2016fast} also advocates the usage of such a loss function.

%%%%%%%%%%%%%%%%%%%%% experiments section from icassp18 %%%%%%%

\subsection{Contribution III: Phase retrieval}
%\label{sec:alg}

We now propose a method to solve a different category of nonlinear inverse problems using our overall broad approach. Specifically, we will solve inverse problems where the observations are of the form:
\[
y = | A x | +~\text{noise},
\]
i.e., only magnitude information of the phaseless measurements are retained. This is the well-known phase retrieval setting~\cite{candes2015phase,demanet2014stable}; here the added twist is that $x$ is assumed to obey a generative prior.

One approach to solving this problem is to cast the recovery as a nonlinear optimization problem based on a suitable loss function $F(x)$ that measures the (squared) difference between the measurements and the purported phaseless observations. We will slightly depart from this and instead propose a new algorithm (Phase-PGD) for solving the above problem. See Alg.\ \ref{alg:phase_gan}. In each iteration of the Phase-PGD algorithm , three steps are performed: a phase update step, a gradient descent update step, and a projection step. %$T$ is total number of iterations. 
%\subsection{Phase update}

The first step is to calculate the phase of $\A\x$. For real $\A$ and $\x$, at the $t^{th}$ iteration, we update the phase estimate:
\[
\p_t = \text{phase}(\A\x_t) \coloneqq \sign{\A\x_t}.
\] 
After calculating the phase vector $\p$, we can use an element-wise product between $\p$ and $\y$ as an estimate of linear measurements and convert the phase retrieval problem into a linear inverse problem.

The second step is simply an application of a gradient descent update rule on the loss function $f(\cdot)$ which is given as:
\[
f(\x) \coloneqq \|\y\odot \p-\A\x\|^2.
\] 
Thus, the gradient descent update at the $t^{th}$ iteration is given by:
\[
\w_t \leftarrow \x_t + \eta \A^T(\y\odot \p_t-\A\x_t),
\] 
where $\eta$ is the learning rate.

% \subsection{Projection step}
The third steps is the projection step, in which we aim to find an image from the span of the generator, $\mathcal{M}$  which is closest to our current estimate $\w_t$. 
We define the projection operator $\mathcal{P}_G$ as follows:
\[
\mathcal{P}_G\left(\w_t\right) \coloneqq G\left(\argmin_{\z}L_{in}(\z)\right),
\]
where $L_{in}$ is the inner loss function defined as,
\[
L_{in}(\z) \coloneqq \|\w_t - G(\z)\|^2.
\]
We solve the inner optimization problem by running gradient descent with $T_{in}$ number of updates on $L_{in}(\z)$. 
% The learning rate $\eta_{in}$ is chosen empirically for this inner optimization. In each of the $T$ iterations, we run $T_{in}$ updates for calculating the projection. Therefore, $T \times T_{in}$ is the total number of gradient descent updates required in our approach.

\begin{algorithm}[t]
	\caption{\textsc{Phase-PGD}}
	\label{alg:phase_gan}
	%\label{alg:APPGD}
	\begin{algorithmic}[1]
	\State \textbf{Inputs:} $\y$, $\A$, $G$, $T$, \textbf{Output:}  $\widehat{\x}$
	\State \text{Choose an initial point $\x_0 \in \mathbb{R}^n$} \hspace{17.8em} % $\triangleright$  \textbf{Initialization}
	\For {t = 1,\ldots T }
	\State $\p_{t-1}\leftarrow \sign{\A\x_{{t-1}}}$ %$\triangleright$ % \textbf{Phase update}
	\State $\w_{t-1} \leftarrow \x_{t-1} + \eta \A^T(\y\odot \p_{t-1}-\A\x_{t-1})$ \hspace{8.8em} %$\triangleright$ % \textbf{Gradient Descent update}
	\State $\x_{t} \leftarrow \mathcal{P}_G(\w_{t-1}) = G\left(\argmin_\z \|\w_{t-1} - G(\z)\|\right)$ \hspace{0.6em} % $\triangleright$  \textbf{Projection}
		%\State estimate $z_l$ from $u = \exp(jtz_l)$ using 
	\EndFor
	\State $\widehat{\x} \leftarrow \x_{T}$
	\end{algorithmic}
\end{algorithm}

We now analyze this algorithm. 
This part of the algorithm is described in Lines 3-7 of Algorithm~\ref{alg:phase_gan}. We can prove that {provided a good initial estimate ($\x_0$), Phase-PGD provably converges to $\x^*$.} The high level intuition is as follows. Ignoring the noise, the observation model for phase retrieval can be restated as follows:
\begin{align*}
\sign{\iprod{\ai}{\xo}}\odot y_i = \iprod{\ai}{\xo} ,
\end{align*}
for all $i=\{1,2,\ldots, m\}$. To ease notation, denote the \emph{phase vector} ${p} \in \reals^m$ as a vector that contains the unknown signs of the measurements, i.e., ${p}_i = \sign{\iprod{\ai}{\x}}$ for all $i=\{1,2,\ldots,m\}$. Let ${p}^*$ denote the true phase vector and let $\mathcal{P}$ denote the set of all phase vectors, i.e. $\mathcal{P} = \cbrak{\vect{p}:p_i = \pm 1, \forall i}$. Then our measurement model gets modified as:
\begin{align*}
{p}^*\odot\y = \A \xo.
\end{align*}

Therefore, the recovery of $\xo$ can be posed as a (non-convex) optimization problem:
\begin{align} \label{eq:lossfunc}
\min_{\x \in \subspaces,\vect{p} \in \mathcal{P}} \norm{\A\x - {p}\odot\y}^2
\end{align}

To solve this problem, we alternate between estimating $\vect{p}$ and $\x$.
We perform two estimation steps: 

\begin{enumerate}
\item if we fix the signal estimate $\x$, then the minimizer $\vect{p} \in \mathcal{P}$ is given in closed form as:
	\begin{align} \label{eq:phase_est}
	{p}=\sign{\A\x} ,
	\end{align} 
\item and if we fix the phase vector $\vect{p}$, the signal vector $\x \in \subspaces$ can be obtained by solving:
	\begin{align} \label{eq:loss_min}
	\min_{\x \in \subspaces} \|\A \x-{p}\odot \y\|_2 .
	\end{align}
\end{enumerate}

We now analyze our proposed descent scheme. We obtain:

\begin{restatable}{theorem}{convergence}
\label{thm:lin_convergence}
	Suppose we have an initialization $\x_0 \in \subspaces$ satisfying $\distop{\x_0}{\xo} \leq \delta_0 \twonorm{\xo}$, for $0 < \delta_0 < 1$, and suppose the number of (Gaussian) measurements,
	$$ m >  C \rbrak{k d \log n}, $$ 
	for some large enough constant $C$. %Let $L = O(\log 1/\delta_0)$. 
	Then with high probability the iterates $\x_{t+1}$ of Algorithm~\ref{alg:phase_gan}, satisfy:
	\begin{align} \label{eq:mainconvergence}
	\distop{\x_{t+1}}{\xo} \leq {\rho}\, \distop{\x_{t}}{\xo},
	\end{align}
	where $\x_t,\x_{t+1}, \xo \in \mathcal{M}$, and $ 0 < \rho < 1$ is a constant.
\end{restatable}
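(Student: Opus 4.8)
The plan is to prove a \emph{local} contraction: assuming the current iterate $x_t$ lies in a small neighborhood of $x^*$, I show that one iteration shrinks the distance by a fixed factor $\rho<1$, and then close the argument by induction, since the iterate provably stays in the neighborhood. I work in the noiseless regime, where the true phase $p^*$ obeys $p^*\odot y = Ax^*$; the initialization $\mathrm{dist}(x_0,x^*)\le\delta_0\|x^*\|$ is taken as given (so no spectral-initialization analysis is needed here).

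First I would decompose the gradient-plus-projection update. Writing $h = x_t - x^*$ and introducing the phase-error vector $e = (p_t - p^*)\odot y$, the identity $p_t\odot y - Ax_t = -Ah + e$ together with $p^*\odot y = Ax^*$ gives
\[
w_t - x^* = (I - \eta A^T A)\,h + \eta A^T e .
\]
Since $x^*\in\mathcal{M}$ and $x_{t+1}$ is the (approximate) Euclidean projection of $w_t$ onto $\mathcal{M}$, the projection property yields $\|x_{t+1}-w_t\|\le\|x^*-w_t\|$ (plus a benign $+\varepsilon$ if the projection is only $\varepsilon$-approximate); expanding the square reduces this to the basic inequality
\[
\|x_{t+1}-x^*\|^2 \le 2\big\langle x_{t+1}-x^*,\,w_t-x^*\big\rangle .
\]
Substituting the decomposition of $w_t-x^*$ and writing $\Delta = x_{t+1}-x^*$ reduces the problem to bounding a linear term $\langle\Delta,(I-\eta A^TA)h\rangle$ and a phase-error term $\eta\langle A\Delta, e\rangle$. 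Both $\Delta$ and $h$ are \emph{differences of points in} $\mathcal{M}$, so they lie in the set on which the \SREC\ applies.

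Next I would control the linear term via the set-restricted behaviour of the Gaussian matrix. Because $A$ (with $A_{ij}\sim N(0,1/m)$, $m\ge C\,kd\log n$) satisfies both the lower \SREC\ bound and the upper spectral bound $\|Av\|\le\rho\|v\|$ uniformly over differences of range points, polarization yields a restricted near-isometry on the difference set. Choosing $\eta$ so that $I-\eta A^TA$ acts as a near-contraction along these restricted directions, the linear contribution is bounded by $c_1\|h\|\,\|\Delta\|$ for a constant $c_1$ that the restricted near-isometry forces to be small, exactly as in the linear PGD analysis of Contribution~I. After dividing the basic inequality through by $\|\Delta\|$, the claimed contraction will follow provided the phase-error term contributes only a further small multiple of $\|h\|$.

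The main obstacle is precisely this phase-error term $\eta\langle A\Delta,e\rangle\le\eta\rho\,\|\Delta\|\,\|e\|$, which demands a \emph{sharp} bound on $\|e\|=\|(p_t-p^*)\odot y\|$. The naive estimate uses the pointwise fact that a sign flip at index $i$ forces $y_i=|\langle a_i,x^*\rangle|\le|\langle a_i,h\rangle|$, and hence $\|e\|^2 = 4\sum_{i:\,p_{t,i}\ne p^*_i}y_i^2 \le 4\|Ah\|^2\le 4\rho^2\|h\|^2$; this gives only $\|e\|\le 2\rho\|h\|$, which is too weak since $\rho\approx1$. The crux is to exploit that the flipped indices form a \emph{small} set concentrated near the measurement decision boundaries, where $y_i$ is itself small, so that in fact $\|e\|\le c_2\,\|h\|$ with $c_2$ driven small by the neighborhood radius $\delta_0$. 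I would establish this by first computing the relevant Gaussian expectation (which depends only on the angle between $x_t$ and $x^*$ and is small when $\mathrm{dist}(x_t,x^*)\le\delta_0\|x^*\|$), then upgrading to a uniform high-probability bound over all $x_t\in\mathcal{M}$ in the neighborhood by concentration combined with a covering argument; the covering number of the range of a $d$-layer, latent-dimension-$k$ generator is $\exp(O(kd\log n))$, which is exactly what forces $m=O(kd\log n)$. Combining the two bounds gives $\|x_{t+1}-x^*\|\le(c_1+c_2)\|x_t-x^*\|$, and for $\delta_0$ small enough the factor $\rho:=c_1+c_2$ satisfies $\rho<1$; since the distance then strictly decreases, $x_{t+1}$ remains in the neighborhood and the same uniform high-probability event continues to apply, so the contraction iterates and the theorem follows by induction.
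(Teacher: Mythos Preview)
Your proposal is correct and follows essentially the same approach as the paper: decompose the update into a linear-PGD contraction plus a phase-error term $e_t=(p_t-p^*)\odot y$, control the former via the \SREC/upper-bound behaviour of $A$ on the generator range, and control $\|e_t\|$ by a Gaussian-expectation computation upgraded uniformly over the neighborhood via a covering-number argument (the paper packages this as Lemma~\ref{lem:phase_err_bound}). The paper's sketch is terser—citing an ``unpacking'' argument for the linear part and stating the phase-error lemma without proof—but the key ideas, including your observation that the naive $\|e\|\le 2\rho\|h\|$ bound is too weak and that a sharper bound exploiting concentration near the decision boundaries is needed, match the intended analysis.
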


%\Salnote{Hand's paper requires $m = O(kd^2 \log n)$.}

\noindent{\textbf{Proof sketch:}} The high level idea behind the proof is that with a $\delta$-ball around the true signal $\xo$, the ``phase noise'' can be suitably bounded in terms of a constant times the signal estimation error. To be more precise, suppose that $\z^* = \A \xo = \p^*\odot\y$. Then, at any iteration $t$, we have:
\begin{align*}
\z_t &= \p_t \odot \y \\
&= \p^* \odot \y + (\p_t - \p^*) \odot \y\\
&= \z^* + \e_t,
\end{align*}
where $\e_t$ can be viewed as the ``phase noise''. Now, examining Line 6 of the above algorithm, we have that $\x_t$ is the output of Phase-PGD after $t$ iterations. An ``unpacking'' argument similar to the one in \cite{Jagatap2017,jagatap2019sample} indicates that:
\[
\norm{\x_t - \xo} \leq \alpha \norm{\x_{t-1} - \xo} + \beta \norm{\e_t} ,
\]
where $\alpha$ is a small enough constant. We will show that $\norm{\e_t}$ can be also bounded in terms of $\norm{\x_{t-1} - \xo}$, via Lemma \ref{lem:phase_err_bound} below. Consequently: 
\[
\norm{\x_t - \xo} \leq \rho\norm{\x_{t-1} - \xo},
\]
where $\rho$ is a small enough constant. 

We therefore achieve a per-step error reduction scheme if the initial estimate $x_0$ satisfies $\norm{x_0 - \x^*} \leq \delta_0 \norm{\x^*}$. This result can be trivially extended to the case where the initial estimate $x_0$ satisfies $\norm{x_0 + \x^*} \leq \delta_0 \norm{\x^*}$, hence giving the convergence criterion of the form (for $\rho < 1$): 
\begin{align*}
\distop{\x_{t}}{\xo} \leq \rho\, \distop{\x_{t-1}}{\xo}.
\end{align*}

We now state Lemma \ref{lem:phase_err_bound} without proof; this is a straightfoward  adaptation of the covering number analysis of analysis of \cite{bora2017compressed}.

\begin{restatable}{theorem}{phaseerror} \label{lem:phase_err_bound}
	Suppose that the generator network model $G(\cdot)$ is comprised of $d$ layers of neurons with ReLU activation functions and weight matrices with bounded operator norms. As long as the initial estimate is a small distance away from the true signal $\xo \in \mathcal{M}$ ( i.e. $\distop{\x_0}{\xo} \leq \delta \norm{\xo}$)
	and subsequently,
	$\distop{x_t}{\xo} \leq \delta \norm{\xo}$, where $\x_t$ is the $t^{th}$ update of Algorithm~\ref{alg:phase_gan}, then the following bound holds for any $t \geq 0$:
	\begin{align*}
	\norm{\e_{t+1}} \leq  \rho_1 \norm{\x_t - \xo},
	\end{align*}
	with high probability, as long as $ m > C (kd \log n)$ and $\rho_1 < 1$ is a constant.
\end{restatable}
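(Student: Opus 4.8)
The plan is to bound the phase-noise vector $\e_{t+1}$ coordinate-by-coordinate and then show that its energy is a small, controllable fraction of $\norm{\x_t - \xo}^2$, uniformly over the range of $G$, via the covering-number machinery of~\cite{bora2017compressed}. Writing $\p_t = \sign{\A\x_t}$ and $\p^* = \sign{\A\xo}$, the identity $\e_{t+1} = (\p_t - \p^*)\odot\y$ shows that the $i$-th coordinate of $\e_{t+1}$ is nonzero precisely on the \emph{flip set}
\[
\mathcal{F}_t = \{\, i : \sign{\iprod{\ai}{\x_t}} \neq \sign{\iprod{\ai}{\xo}} \,\},
\]
and that there $\abs{(\e_{t+1})_i} = 2 y_i = 2\abs{\iprod{\ai}{\xo}}$ in the noiseless model. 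Hence $\norm{\e_{t+1}}^2 = 4\sum_{i\in\mathcal{F}_t}\abs{\iprod{\ai}{\xo}}^2$. The first key step is a pointwise comparison: if $i\in\mathcal{F}_t$ then $\iprod{\ai}{\x_t}$ and $\iprod{\ai}{\xo}$ have opposite signs, so $\abs{\iprod{\ai}{\xo}} \leq \abs{\iprod{\ai}{\x_t-\xo}}$, which yields the (rows-dependent) bound
\[
\norm{\e_{t+1}}^2 \leq 4\sum_{i\in\mathcal{F}_t}\abs{\iprod{\ai}{\x_t-\xo}}^2 .
\]

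With $h = \x_t - \xo$ fixed, the remaining task is to show $\sum_{i\in\mathcal{F}_t}\abs{\iprod{\ai}{h}}^2 \leq \tfrac14\rho_1^2\,\norm{h}^2$ with high probability. For a single Gaussian row this is a \emph{correlated} expectation, since the indicator $\mathds{1}[i\in\mathcal{F}_t]$ and the quantity $\iprod{\ai}{h}^2$ both depend on $\ai$; I would evaluate it by restricting $\ai$ to the two-dimensional plane $\spanof{\xo,\x_t}$, where the flip event becomes the angular wedge between the lines orthogonal to $\xo$ and to $\x_t$. Under the standing hypothesis $\distop{\x_t}{\xo}\leq\delta\norm{\xo}$ the angle $\theta_t$ between $\x_t$ and $\xo$ obeys $\sin\theta_t = O(\delta)$, so both the wedge probability ($\theta_t/\pi$) and the conditional second moment of $\iprod{\ai}{h}^2$ restricted to the wedge scale like $O(\delta)\norm{h}^2$; multiplying gives $\ex\big[\sum_{i\in\mathcal{F}_t}\iprod{\ai}{h}^2\big] = O(\delta)\,m\,\norm{h}^2$. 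Since each summand is sub-exponential, a Bernstein-type tail bound shows the sum concentrates near this mean once $m$ is large enough.

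Finally, I would upgrade this pointwise estimate to one holding \emph{simultaneously} over all admissible pairs $(\x_t,\xo)\in\mathcal{M}$. Because $G$ is a $d$-layer ReLU network with operator-norm-bounded weights and compact range, its image admits an $\epsilon$-net of cardinality $e^{O(kd\log n)}$, exactly as in~\cite{bora2017compressed}; the same weight bounds make $G$ Lipschitz, so the functional $(\x,\xo)\mapsto\sum_{i\in\mathcal{F}}\abs{\iprod{\ai}{\x-\xo}}^2$ varies slowly between net points. A union bound of the per-pair concentration estimate over the net, together with a Lipschitz/operator-norm argument to absorb the discretization error, makes the total failure probability negligible under $m > C(kd\log n)$. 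Combining with the pointwise comparison gives $\norm{\e_{t+1}}^2 \leq 4\,O(\delta)\,\norm{h}^2 = \rho_1^2\norm{\x_t-\xo}^2$, and taking $\delta$ small makes $\rho_1<1$, as claimed.

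The main obstacle is the correlated expectation in the second step: the flip indicator and $\iprod{\ai}{h}^2$ are functions of the same Gaussian vector and are positively correlated, so one cannot simply multiply the flip probability by $\ex[\iprod{\ai}{h}^2]=\norm{h}^2$ — doing so would discard the crucial $O(\delta)$ factor and leave $\rho_1\geq 1$. Extracting the sharp $O(\delta)$ scaling requires the explicit two-dimensional wedge geometry, and then transferring that bound uniformly over $\text{Range}(G)$ without incurring an ambient-dimension penalty is precisely what forces the covering-number argument and the $m = O(kd\log n)$ measurement budget.
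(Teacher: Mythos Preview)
The paper does not actually prove this statement: it explicitly writes ``We now state Lemma~\ref{lem:phase_err_bound} without proof; this is a straightforward adaptation of the covering number analysis of~\cite{bora2017compressed}.'' Your proposal is therefore not competing with a paper proof but rather supplying the details the authors omit, and it does so along the lines they indicate --- the flip-set decomposition, the wedge-geometry bound for Gaussian rows (standard in the phase-retrieval literature, e.g.\ the cited~\cite{Jagatap2017,jagatap2019sample}), and uniformity via the $e^{O(kd\log n)}$ covering number of $\text{Range}(G)$ from~\cite{bora2017compressed}. This is the correct route.

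One small imprecision worth tightening: you write that ``the conditional second moment of $\iprod{\ai}{h}^2$ restricted to the wedge scales like $O(\delta)\norm{h}^2$,'' which would give an $O(\delta^2)$ product. In fact the conditional second moment is $\Theta(\norm{h}^2)$ (in the two-dimensional picture, $h$ points essentially along the wedge axis, so $\iprod{\ai}{h}$ is not small there); it is only the wedge \emph{probability} that contributes the $O(\delta)$ factor, yielding $\ex\big[\mathds{1}[i\in\mathcal{F}_t]\iprod{\ai}{h}^2\big] = O(\delta)\norm{h}^2/m$ per row under the paper's $A_{ij}\sim\mathcal{N}(0,1/m)$ normalization, hence $O(\delta)\norm{h}^2$ after summing (your stated $O(\delta)\,m\,\norm{h}^2$ carries an extra $m$). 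This does not change the conclusion $\rho_1^2 = O(\delta) < 1$, and your identification of the correlated-expectation subtlety as the crux is exactly right.
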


\subsection{Contribution IV: Addressing signal model mismatch}

A key assumption in all the above discussion has been that the underlying signal is well-modeled by the generator $G$. This, of course, is idealistic; natural signals (and images) exhibit a wide variety of features, some of which may not be modeled by (even) a well-trained generator.

We now generalize the $\varepsilon$-PGD algorithm to handle situations involving signal model mismatch. Assume that the target signal can be decomposed as:
\[
x^* = G(z) + v,
\]
where $\norm{B^T v}_0 \leq l \ll n$ for some ortho-basis $B$. 

%\subsection{Main Result}
For this model, we attempt to solve a (slightly) different optimization problem:
\begin{align}
\widehat{x} &= \argmin~F(x),~~\label{eq:mcop}\\
&\text{s. t.}~~~x = G(z) + v,,\nonumber \\
&~~~~~~~~\norm{B^T v}_0 \leq l.
\end{align}

We propose a new algorithm to solve this problem that we call \emph{Myoptic $\varepsilon$-PGD}. This algorithm is given in Alg.\ \ref{alg:mPGD}\footnote{The algorithm is a variant of block-coordinate descent, except that the block updates share the same gradient term.}.

\begin{algorithm}[t]
	\caption{\textsc{Myopic $\varepsilon$-PGD}}
	\label{alg:mPGD}
	\begin{algorithmic}[1]
	\State \textbf{Inputs:} $y$, $T$, $\nabla$; \textbf{Output:}  $\widehat{x}$
	\State $x_0, u_0, v_0 \leftarrow \textbf{0}$ \hspace{17.8em} % $\triangleright$  \textbf{Initialization}
	\While {$t < T$}
	\State $u_{t+1} = P_G(u_t - \eta \nabla_x F(x_t))$
	\State $v_{t+1} = \text{Thresh}_{B,l}(v_t - \eta \nabla_x F(x_t))$
	\State $x_{t+1} = u_{t+1} + \nu_{t+1}$
	\State $t \leftarrow t+1$
	\EndWhile
	\State $\widehat{x} \leftarrow x_{T}$
	\end{algorithmic}
\end{algorithm}

\begin{theorem}
\label{thm:mpgd}
Let $\oplus$ denote the Minkowski sum. If $F$ satisfies RSC/RSS over $\text{Range}(G) \oplus \text{Span}(B)$ with constants $\alpha$ and $\beta$, and if we assume $\mu$-incoherence between $B$ and $\text{Range}(G)$, we have:
\[
F(x_{t+1}) - F(x^*) \leq \left(\frac{2 - \frac{\beta}{\alpha} \frac{1 - 2.5 \mu}{1 - \mu}}{1 - \frac{\beta}{2\alpha} \frac{\mu}{1 - \mu}}\right) (F(x_t) - F(x^*)) + O(\varepsilon) \, .
\]
\end{theorem}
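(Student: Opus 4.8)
The plan is to replay the proof of Theorem~\ref{thm:pgd} while carrying the two components $u_t \in \text{Range}(G)$ and $v_t \in \text{Span}(B)$ of the iterate $x_t = u_t + v_t$ separately, using $\mu$-incoherence as the only new ingredient to control their interaction. Fix $\eta = 1/\beta$, write $g_t = \nabla F(x_t)$, and decompose the target as $x^* = u^* + v^*$ with $u^* \in \text{Range}(G)$ and $v^*$ being $l$-sparse in $B$. I would again start from the RSS inequality $F(x_{t+1}) - F(x_t) \le \langle g_t, x_{t+1}-x_t\rangle + \tfrac{\beta}{2}\|x_{t+1}-x_t\|^2$, but now $x_{t+1}-x_t = \Delta u + \Delta v$ with $\Delta u = u_{t+1}-u_t$ and $\Delta v = v_{t+1}-v_t$. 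The crucial observation is that both block updates in Alg.~\ref{alg:mPGD} descend along the \emph{same} gradient $g_t$, so after substituting $g_t = \beta(u_t - w_t^u) = \beta(v_t - w_t^v)$ (with $w_t^u, w_t^v$ the pre-projection points), the quadratic $\tfrac{\beta}{2}\|\Delta u+\Delta v\|^2$ splits into a per-block completing-the-square plus a single residual cross term $\beta\langle \Delta u, \Delta v\rangle$.

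Next I would invoke the two projection guarantees. For the $u$-block, the $\varepsilon$-approximate oracle gives $\|u_{t+1}-w_t^u\|^2 \le \|u^*-w_t^u\|^2 + \varepsilon$; for the $v$-block, hard thresholding returns the best $l$-sparse-in-$B$ approximation, which is exactly contractive toward the feasible $l$-sparse target $v^*$, so $\|v_{t+1}-w_t^v\|^2 \le \|v^*-w_t^v\|^2$ with no slack. Substituting these and expanding each square exactly as in Theorem~\ref{thm:pgd} collapses the per-block terms and, using $(u^*-u_t)+(v^*-v_t) = x^*-x_t$, yields
\[
F(x_{t+1}) - F(x_t) \le \tfrac{\beta}{2}\left(\|u^*-u_t\|^2 + \|v^*-v_t\|^2\right) + \langle x^*-x_t,\, g_t\rangle + \tfrac{\beta\varepsilon}{2} + \beta\langle \Delta u, \Delta v\rangle .
\]
This is structurally identical to the key line of the single-component proof, except for the sum-of-squares in place of one $\|x^*-x_t\|^2$ and the extra cross term.

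Now incoherence enters twice. First, applied to $x^*-x_t = (u^*-u_t)+(v^*-v_t)$ it gives $\|u^*-u_t\|^2+\|v^*-v_t\|^2 \le \tfrac{1}{1-\mu}\|x^*-x_t\|^2$, which inflates the leading coefficient by $\tfrac{1}{1-\mu}$ (the source of that factor in the theorem). I would then use RSC to replace $\langle x^*-x_t, g_t\rangle$ by $F(x^*)-F(x_t)-\tfrac{\alpha}{2}\|x^*-x_t\|^2$, and convert $\|x^*-x_t\|^2$ into $\tfrac{2}{\alpha}(F(x_t)-F(x^*))$ up to $O(\gamma\Delta)=O(\varepsilon)$ via RSC at the minimizer together with $\|\nabla F(x^*)\|\le\gamma$ and the diameter assumption. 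This reproduces the single-component contraction, now scaled by $\tfrac{1}{1-\mu}$. The second use of incoherence is on the residual: $\beta\langle\Delta u,\Delta v\rangle \le \tfrac{\beta\mu}{2(1-\mu)}\|x_{t+1}-x_t\|^2$, after which $\|x_{t+1}-x_t\|^2$ must be traded for function-value gaps at \emph{both} $t$ and $t+1$ (again via RSC at $x^*$). The piece proportional to $F(x_{t+1})-F(x^*)$ is collected onto the left-hand side, which is exactly what produces the denominator $1 - \tfrac{\beta}{2\alpha}\tfrac{\mu}{1-\mu}$; the piece proportional to $F(x_t)-F(x^*)$ sharpens the leading coefficient into the stated $2 - \tfrac{\beta}{\alpha}\tfrac{1-2.5\mu}{1-\mu}$. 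Folding all $\varepsilon$- and $\gamma\Delta$-terms into $O(\varepsilon)$ and dividing through then gives the claimed recursion.

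The main obstacle is the bookkeeping of the cross term $\beta\langle\Delta u,\Delta v\rangle$. Unlike the single-component proof, the increments $\Delta u,\Delta v$ are not directly controlled by the projection inequalities, so they must be re-expressed through telescopings such as $\Delta u = (u_{t+1}-u^*)-(u_t-u^*)$ (and likewise for $v$) and bounded via triangle/AM--GM steps and RSC at $x^*$ at both time indices; doing this \emph{tightly} is what decides whether the contraction factor stays below $1$ under the standing assumption $1\le\beta/\alpha<2$. A crude bound such as $\|a+b\|^2\le 2\|a\|^2+2\|b\|^2$ would inflate precisely the delicate constants (the $2.5$ in the numerator and the $\tfrac12$ in the denominator) and destroy contraction for small $\mu$, so the careful constant tracking, rather than any single conceptual step, is where the difficulty lies. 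A useful sanity check is the limit $\mu\to 0$: the cross term vanishes, the denominator becomes $1$, and the factor reduces to $2-\beta/\alpha\in(0,1]$, recovering the per-step coefficient of the $\varepsilon$-PGD analysis and hence linear convergence up to an $O(\varepsilon)$ ball. This realizes the nonlinear demixing extension of~\cite{spinIT}.
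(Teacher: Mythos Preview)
Your proposal is correct and follows essentially the same route as the paper: per-block completion of squares using the projection/thresholding guarantees, two applications of $\mu$-incoherence (once on $x^*-x_t=(u^*-u_t)+(v^*-v_t)$ and once on $x_{t+1}-x_t=\Delta u+\Delta v$), and RSC at $x^*$ to convert both $\|x^*-x_t\|^2$ and $\|x^*-x_{t+1}\|^2$ into function-value gaps, the latter producing the denominator. The only cosmetic difference is that the paper first re-expands everything into $\|x^*-w_t\|^2$ plus two explicit cross terms $\langle \Delta u,\Delta v\rangle$ and $\langle u^*-u_t,\,v^*-v_t\rangle$, whereas you keep $\|u^*-u_t\|^2+\|v^*-v_t\|^2$ and absorb the second cross term via the $\tfrac{1}{1-\mu}$ inflation; these are the same inequality rearranged.
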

\bigskip
\begin{proof}
We will generalize the proof technique of~\cite{spinIT}.
We first define some auxiliary variables that help us with the proof. Let:
\begin{align*}
w_t &= x_t - \eta \nabla F(x_t), \\
w_t^u &= u_t - \eta \nabla F(x_t), \\
w_t^v &= v_t - \eta \nabla F(x_t) .
\end{align*}
and let $x^* = u^* + v^*$ be the minimizer that we seek. 
As above, by invoking RSS and with some algebra, we obtain:
\begin{align}
F(x_{t+1}) - F(x^*) \leq \frac{\beta}{2} \left(\norm{x_{t+1} - w_t}^2 - \norm{x_t - w_t}^2 \right),
\label{eq:upperb_rsc}
\end{align}
However, by definition, 
\begin{align*}
x_{t+1} &= u_{t+1} + v_{t+1}, \\
x_{t} &= u_{t} + v_{t} .
\end{align*}
Therefore, 
\begin{align*}
&\norm{x_{t+1} - w_t}^2  \\
& = \| u_{t+1} - (u_t - \eta \nabla F(x_t)) + \\
&~~~~v_{t+1} - (v_t - \eta \nabla F(x_t)) + \eta \nabla F(x_t) \|^2 \\
& = \| u_{t+1} - (u_t - \eta \nabla F(x_t)) \|^2 + \| \eta \nabla F(x_t) \|^2 +\\ %\\
&~~~~ \| v_{t+1} - (v_t - \eta \nabla F(x_t)) \|^2 \\
&~~+ 2 \langle u_{t+1} - (u_t - \eta \nabla F(x_t)), \eta \nabla F(x_t) \rangle \\%\\
&~~+ 2 \langle v_{t+1} - (v_t - \eta \nabla F(x_t)), \eta \nabla F(x_t) \rangle \\
&~~+ 2 \langle u_{t+1} - (u_t - \eta \nabla F(x_t)), v_{t+1} - (v_t - \eta \nabla F(x_t)) \rangle .
\end{align*}
But $u_{t+1}$ is an $\varepsilon$-projection of $w_t^u$ and $u^*$ is in the range of $G$, we have: 
\[
\norm{u_{t+1} - w_t^u}^2 \leq \norm{u^* - w^u_t}^2 + \varepsilon.
\]
Similarly, since $v_{t+1}$ is an $l$-sparse thresholded version of $w_t^v$, we have:
\[
\norm{v_{t+1} - w_t^v}^2 \leq \norm{v^* - w^v_t}^2 .
\]
Plugging in these two upper bounds, we get:
\begin{align*}
&\norm{x_{t+1} - w_t}^2  \\
& \leq \| u^* - (u_t - \eta \nabla F(x_t)) \|^2 + \varepsilon \\ %\\
&~~+ \| \eta \nabla F(x_t) \|^2 + \| v^* - (v_t - \eta \nabla F(x_t)) \|^2 \\
&~~+ 2 \langle u_{t+1} - (u_t - \eta \nabla F(x_t)), \eta \nabla F(x_t) \rangle \\%\\
&~~+ 2 \langle v_{t+1} - (v_t - \eta \nabla F(x_t)), \eta \nabla F(x_t) \rangle \\
&~~+ 2 \langle u_{t+1} - (u_t - \eta \nabla F(x_t)), v_{t+1} - (v_t - \eta \nabla F(x_t)) \rangle .
\end{align*}
Expanding squares and cancelling (several) terms, the right hand side of the above inequality can be simplified to obtain:
\begin{align*}
&\norm{x_{t+1} - w_t}^2 \\ 
&\leq \norm{u^* + v^* - w_t}^2 +  \varepsilon  \\
&~~~~ + 2 \langle u_{t+1} - u_t, v_{t+1} - v_t \rangle - 2 \langle u^* - u_t, v^* - v_t \rangle \\
&= \norm{x^* - w_t}^2  + \varepsilon + 2 \langle u_{t+1} - u_t, v_{t+1} - v_t \rangle \\
&~~~~- 2 \langle u^* - u_t, v^* - v_t \rangle .
\end{align*}
Plugging this into \eqref{eq:upperb_rsc}, we get:
\begin{align*}
& F(x_{t+1}) - F(x^*) \\
& \leq \underbrace{\frac{\beta}{2} \left( \norm{x^* - w_t}^2 - \norm{x_t - w_t}^2 \right)}_{\mathbb{T}_1}  \\
&~~+ \underbrace{\beta \left( \langle u_{t+1} - u_t, v_{t+1} - v_t \rangle - 2 \langle u^* - u_t, v^* - v_t \rangle  \right)}_{\mathbb{T}_2} \\
&~~+ \frac{\beta \varepsilon}{2}.
 \end{align*}
We already know how to bound the first term $\mathbb{T}_1$, using an identical argument as in the proof of Theorem~\ref{thm:pgd}. We get:
\[
\mathbb{T}_1 \leq \left(2 - \frac{\beta}{\alpha}\right) \left( F(x^*) - F(x_t) \right) + \frac{\beta - \alpha}{\alpha} \gamma \Delta .
\]
The second term $\mathbb{T}_2$ can be bounded as follows. First, observe that
\begin{align*}
& | \langle u_{t+1} - u_t, v_{t+1} \rangle | \\
&\leq \mu \norm{u_{t+1} - u_t} \norm{v_{t+1} - v_t} \\
&\leq \frac{\mu}{2} \left( \norm{u_{t+1} - u_t}^2 + \norm{v_{t+1} - v_t}^2 \right) \\
&\leq \frac{\mu}{2} \left( \norm{u_{t+1} + v_{t+1} - u_t - v_t}^2 \right) \\
&~~~~+ \mu | \langle u_{t+1} - u_t, v_{t+1} - v_t \rangle | .
\end{align*}
This gives us the following inequalities:
\begin{align*}
& | \langle u_{t+1} - u_t, v_{t+1} \rangle | \\
&\leq \frac{\mu}{2(1 - \mu)} \norm{x_{t+1} - x_t}^2 \\
&= \frac{\mu}{2(1 - \mu)} \Big(  \norm{x_{t+1} - x^*}^2 +  \norm{x_{t} - x^*}^2 + \Big. \\
& \Big. ~~~~~~~~~~~~~~~~~~~~2 | \langle x_{t+1} - x^*, x_t - x^* \rangle  |  \Big) \\
&\leq \frac{\mu}{1 - \mu} \left( \norm{x_{t+1} - x^*}^2 +  \norm{x_{t} - x^*}^2  \right) .
\end{align*}
Similarly,
\begin{align*}
| \langle u^* - u_t, v^* - v_t \rangle | &\leq \mu \norm{u^* - u_t} \norm{v^* - v_t} \\
&\leq \frac{\mu}{2} \left( \norm{u^* - u_t}^2 + \norm{v^* - v_t} \right) \\
&= \frac{\mu}{2} \left( \norm{u^* + v^* - u_t - v_t}^2 \right) \\
&~~~~+ \mu | \langle u^* - u_t, v^* - v_t \rangle | ,
\end{align*}
which gives:
\[
| \langle u^* - u_t, v^* - v_t \rangle | \leq \frac{\mu}{2(1- \mu)} \norm{x^* - x_t}^2 .
\]
Combining, we get:
\begin{align*}
\mathbb{T}_2 &\leq \frac{\beta \mu}{2(1- \mu)} \left( 3\norm{x^* - x_t}^2 + \norm{x^* - x_{t+1}}^2 \right) .
\end{align*}
Moreover, by invoking RSC and Cauchy-Schwartz (similar to the proof of Theorem~\ref{thm:pgd}), we have:
\begin{align*}
\norm{x^* - x_t}^2 &\leq \frac{1}{\alpha} \left( F(x_t) - F(x^*) \right) + O(\varepsilon) ,\\
\norm{x^* - x_{t+1}}^2 &\leq \frac{1}{\alpha} \left( F(x_{t+1}) - F(x^*) \right) + O(\varepsilon) .
\end{align*}
Therefore we obtain the upper bound on $\mathbb{T}_2$:
\begin{align*}
\mathbb{T}_2 &\leq \frac{3 \beta \mu}{2 \alpha (1 - \mu)} \left( F(x_t) - F(x^*) \right) \\
&~~~ + \frac{\beta \mu}{2 \alpha (1 - \mu)} \left( F(x_{t+1}) - F(x^*) \right) + C' \varepsilon .
\end{align*}
Plugging in the upper bounds on $\mathbb{T}_1$ and $\mathbb{T}_2$ and re-arranging terms, we get:
\begin{align*}
&\left(1 - \frac{\beta \mu}{2\alpha (1 - \mu)} \right) (F(x_{t+1}) - F(x^*)) \\
&\leq \left(2 - \frac{\beta}{\alpha} + \frac{3 \beta \mu}{2 \alpha (1 - \mu)}  \right) (F(x_{t}) - F(x^*)) + C' \varepsilon,
\end{align*}
which leads to the desired result.
\end{proof}

\begin{figure*}[!t]
	\begin{center}
		\begingroup
		\setlength{\tabcolsep}{4pt} % Default value: 6pt
		\renewcommand{\arraystretch}{1} % Default value: 1
		\begin{tabular}{ccc}      %{c@{\hskip .1pt}c@{\hskip .1pt}c}
			\raisebox{-0.6\height}{
			\begin{tikzpicture}[scale=0.65]
			\pgfplotsset{scaled y ticks=false}
			
			\begin{axis}[
			xlabel=Number of measurements $(m)$,
			ylabel=Reconstruction error (per pixel), 
			grid=both,
			minor y tick num=1,
			major grid style={dashed},
			minor grid style={dotted},
			ymin=0,
			xtick distance=40,
			yticklabel style={
				/pgf/number format/fixed,
				/pgf/number format/precision=5
			}] 
			\addplot[black!60!green, mark=diamond*, mark size=2.5] %lasso
			coordinates {
				(20,0.1116925674)
				(40,0.1104524388)
				(60,0.1077322946)
				(80,0.1043693087)
				(100,0.1009670696)
				(120,0.09845475989)
				(140,0.09478937129)
				(160,0.08939492804)
				(180,0.08597683983)
				(200,0.07938231549)}; 
			\addplot[blue, mark=*, mark size=2] %CSGM
			coordinates{
				
				(20,0.0690527727406)
				(40,  0.0392738807791)
				(60, 0.0335496354056)
				(80, 0.0301922520026)
				(100,  0.022366575626)
				(120, 0.0189172737835)
				(140, 0.0185702356315)
				(160, 0.0134109912651)
				(180, 0.0109117549589)
				(200,0.0108717565358)}; 
			\addplot[red, mark=square*, mark size=2] %IPGAN
			coordinates{
				(20,0.03928229894)
				(40,  0.01578411299)
				(60, 0.01155714582)
				(80, 0.002478766476)
				(100, 0.001179446696)
				(120, 0.0003373235479)
				(140, 0.0001405245319)
				(160,  1.25E-04)
				(180, 6.86E-05)
				(200,5.84E-05) };
			\legend{LASSO,CSGM,PGDGAN} 
			\end{axis} 		
			\end{tikzpicture}}& 
			\raisebox{-0.5\height}{
				\includegraphics[width=0.20\linewidth]{./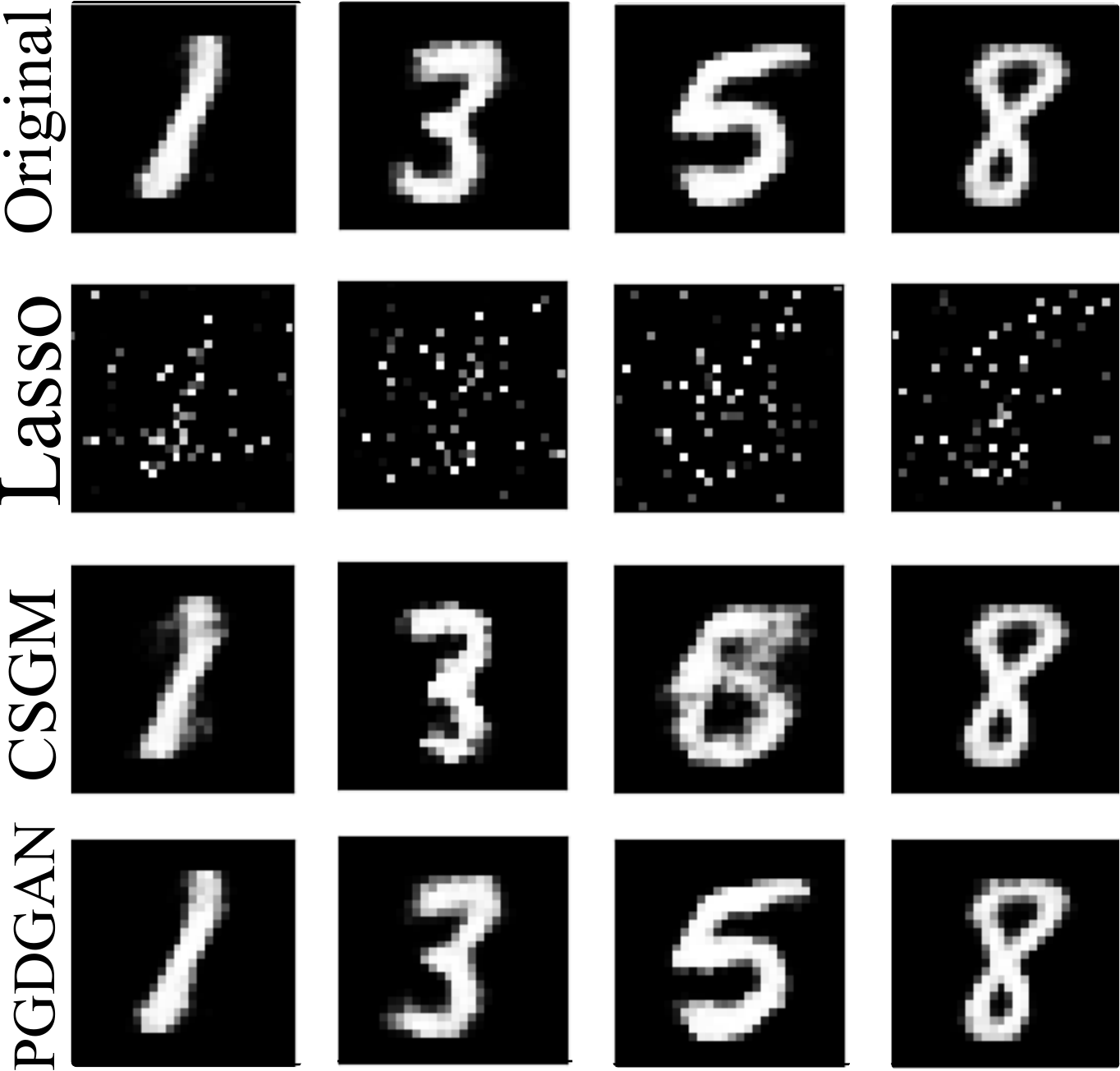}}&
				%	\begin{center}
					\begingroup
					\setlength{\tabcolsep}{1pt} % Default value: 6pt
					\renewcommand{\arraystretch}{1.2} % Default value: 1
					\begin{tabular}{ccccccc}      %{c@{\hskip .1pt}c@{\hskip .1pt}c}
						\begin{sideways}{\scriptsize ~~Original}\end{sideways}&
						%TO change image size, modify newcommand\s in main.tex
						%\includegraphics[width=\s\linewidth]{./}&
						\includegraphics[width=\s\linewidth]{./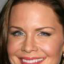}&
						\includegraphics[width=\s\linewidth]{./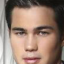}&
						\includegraphics[width=\s\linewidth]{./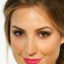}&
						\includegraphics[width=\s\linewidth]{./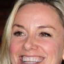}&
						\includegraphics[width=\s\linewidth]{./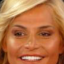}&
						\includegraphics[width=\s\linewidth]{./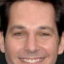}
						\\
						\begin{sideways}{\scriptsize ~~Lasso}\end{sideways}&
						\includegraphics[width=\s\linewidth]{./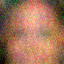}&
						\includegraphics[width=\s\linewidth]{./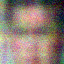}&
						\includegraphics[width=\s\linewidth]{./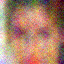}&
						\includegraphics[width=\s\linewidth]{./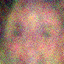}&
						\includegraphics[width=\s\linewidth]{./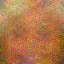}&
						\includegraphics[width=\s\linewidth]{./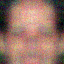}
						\\
						\begin{sideways}{\scriptsize ~~~CSGM}\end{sideways}&		
						\includegraphics[width=\s\linewidth]{./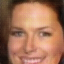}&
						\includegraphics[width=\s\linewidth]{./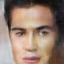}&
						\includegraphics[width=\s\linewidth]{./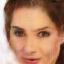}&
						\includegraphics[width=\s\linewidth]{./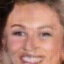}&
						\includegraphics[width=\s\linewidth]{./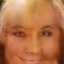}&
						\includegraphics[width=\s\linewidth]{./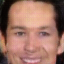}
						\\
						\begin{sideways}{\scriptsize PGDGAN}\end{sideways}&
						\includegraphics[width=\s\linewidth]{./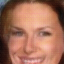}&
						\includegraphics[width=\s\linewidth]{./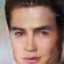}&
						\includegraphics[width=\s\linewidth]{./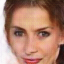}&
						\includegraphics[width=\s\linewidth]{./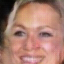}&
						\includegraphics[width=\s\linewidth]{./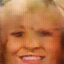}&
						\includegraphics[width=\s\linewidth]{./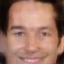}
					\end{tabular}
					\endgroup
					\\
			%	\end{center}\\
				(a) & (b) & (c)	
		\end{tabular}
		\endgroup
	\end{center}
	\caption{\emph{(a) Comparison of our algorithm (Alg.~\ref{alg:linear-PGD}) with CSGM \cite{bora2017compressed} and Lasso on MNIST; (b) Reconstruction results with $m=100$ measurements; (c) Reconstruction results on celebA dataset with $m=1000$ measurements.}}
	\label{fig:mnist}
\end{figure*}

\begin{figure*}[!t]
	\begin{center}
		\begingroup
		\setlength{\tabcolsep}{4pt} % Default value: 6pt
		\renewcommand{\arraystretch}{1} % Default value: 1
		\begin{tabular}{cc}      %{c@{\hskip .1pt}c@{\hskip .1pt}c}
				%	\begin{center}
					\begingroup
					\setlength{\tabcolsep}{1pt} % Default value: 6pt
					\renewcommand{\arraystretch}{1.2} % Default value: 1
					\begin{tabular}{cccccccc}      %{c@{\hskip .1pt}c@{\hskip .1pt}c}
						\begin{sideways}{\scriptsize ~~Original}\end{sideways}&
						%TO change image size, modify newcommand\s in main.tex
						%\includegraphics[width=\s\linewidth]{./fig/ground_truth/0.png}&
						\includegraphics[width=\s\linewidth]{./fig/ground_truth/3.png}&
						\includegraphics[width=\s\linewidth]{./fig/ground_truth/36.png}&
						\includegraphics[width=\s\linewidth]{./fig/ground_truth/63.png}&
						\includegraphics[width=\s\linewidth]{./fig/ground_truth/46.png}&
						\includegraphics[width=\s\linewidth]{./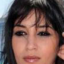}&
						\includegraphics[width=\s\linewidth]{./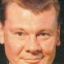}&
						\includegraphics[width=\s\linewidth]{./fig/ground_truth/59.png}
						\\
						\begin{sideways}{\scriptsize ~~Lasso}\end{sideways}&
						\includegraphics[width=\s\linewidth]{./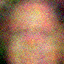}&
						\includegraphics[width=\s\linewidth]{./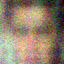}&
						\includegraphics[width=\s\linewidth]{./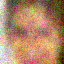}&
						\includegraphics[width=\s\linewidth]{./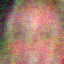}&
						\includegraphics[width=\s\linewidth]{./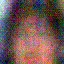}&
						\includegraphics[width=\s\linewidth]{./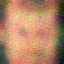}&
						\includegraphics[width=\s\linewidth]{./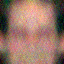}
						\\
						\begin{sideways}{\scriptsize ~~$\eps$-PGD}\end{sideways}&
						\includegraphics[width=\s\linewidth]{./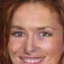}&
						\includegraphics[width=\s\linewidth]{./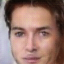}&
						\includegraphics[width=\s\linewidth]{./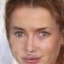}&
						\includegraphics[width=\s\linewidth]{./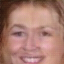}&
						\includegraphics[width=\s\linewidth]{./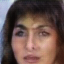}&
						\includegraphics[width=\s\linewidth]{./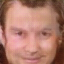}&
						\includegraphics[width=\s\linewidth]{./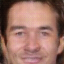}
					\end{tabular}
					\endgroup
		&
				%	\begin{center}
					\begingroup
					\setlength{\tabcolsep}{1pt} % Default value: 6pt
					\renewcommand{\arraystretch}{1.2} % Default value: 1
					\begin{tabular}{cccccccc}      %{c@{\hskip .1pt}c@{\hskip .1pt}c}
						\begin{sideways}{\scriptsize ~~Original}\end{sideways}&
						%TO change image size, modify newcommand\s in main.tex
						%\includegraphics[width=\s\linewidth]{./fig/ground_truth/0.png}&
						\includegraphics[width=\s\linewidth]{./fig/ground_truth/3.png}&
						\includegraphics[width=\s\linewidth]{./fig/ground_truth/36.png}&
						\includegraphics[width=\s\linewidth]{./fig/ground_truth/63.png}&
						\includegraphics[width=\s\linewidth]{./fig/ground_truth/46.png}&
						\includegraphics[width=\s\linewidth]{./fig/ground_truth/18.png}&
						\includegraphics[width=\s\linewidth]{./fig/ground_truth/27.png}&
						\includegraphics[width=\s\linewidth]{./fig/ground_truth/59.png}
						\\
						\begin{sideways}{\scriptsize ~~Lasso}\end{sideways}&
						\includegraphics[width=\s\linewidth]{./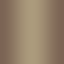}&
						\includegraphics[width=\s\linewidth]{./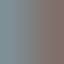}&
						\includegraphics[width=\s\linewidth]{./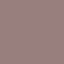}&
						\includegraphics[width=\s\linewidth]{./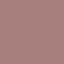}&
						\includegraphics[width=\s\linewidth]{./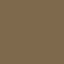}&
						\includegraphics[width=\s\linewidth]{./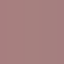}&
						\includegraphics[width=\s\linewidth]{./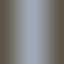}
						\\
						\begin{sideways}{\scriptsize ~~$\eps$-PGD}\end{sideways}&
						\includegraphics[width=\s\linewidth]{./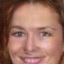}&
						\includegraphics[width=\s\linewidth]{./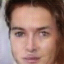}&
						\includegraphics[width=\s\linewidth]{./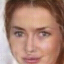}&
						\includegraphics[width=\s\linewidth]{./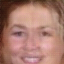}&
						\includegraphics[width=\s\linewidth]{./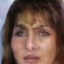}&
						\includegraphics[width=\s\linewidth]{./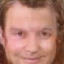}&
						\includegraphics[width=\s\linewidth]{./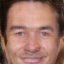}
					\end{tabular}
					\endgroup
					\\
			%	\end{center}\\
				(a) & (b)
		\end{tabular}
		\endgroup
	\end{center}
	\caption{\emph{Comparison of our algorithm (Alg.~\ref{alg:PGD}) with Lasso for non-linear forward models (a) with $\mathcal{A}(x^*)=Ax^*+ sin(Ax^*)$; (b) with $\mathcal{A}(x^*) = sigmoid(Ax^*)$. Reconstruction results are on celebA dataset with $m=1000$ measurements.}}
	\label{fig:nonlinear}
\end{figure*}
\section{Experimental Results}
\label{sec:exp}
Our primary focus in this paper is theoretical. However, we supplement our theory with representative numerical experiments that show the promise of our proposed algorithms. in this section, we describe our experimental setup and report algorithm performance.% with that of \cite{bora2017compressed} as well as the LASSO. 

\subsection{Experiments with compressed sensing}

We use two different GAN architectures and two different datasets in our experiments to show that our approach [Alg.~\ref{alg:linear-PGD}] can work with variety of GAN architectures and datasets. We provide comparisons with the CSGM algorithm proposed in ~\cite{bora2017compressed} as well as the Lasso.

In our experiments, we choose the entries of the matrix $A$ independently from a Gaussian distribution with zero mean and variance $1/m$. In these experiments, we ignore the presence of noise; however, our experiments can be replicated in the presence of additive Gaussian noise. For comparison with CSGM~\cite{bora2017compressed}, we use a gradient descent optimizer keeping the total number of update steps $(T \times T_{in})$ fixed for both algorithms to produce fair comparisons.  %and doesn't allow random restarts.

In the first experiment, we use a very simple GAN model trained on the MNIST dataset, which is collection of $60,000$ handwritten digit images, each of size $28 \times 28$ \cite{lecun1998gradient}. In our GAN, both the generator and the discriminator are fully-connected neural networks with only one hidden layer. The generator consists of $20$ input neurons, $200$ hidden-layer neurons and $784$ output neurons, while the discriminator consists of $784$ input neurons, $128$ hidden layer neurons and $1$ output neuron. The size of the latent space is set to $k = 20$, i.e., the input to our generator is a standard normal vector $z \in R^{20}$.  We train the GAN using the method described in \cite{goodfellow2014generative}. We use the Adam optimizer \cite{kingma2014adam} with learning rate $0.001$ and mini-batch size $128$ for the training. 
 
We test the MNIST GAN with $10$ test images taken from the span of the generator to avoid model mismatch issues, and provide both quantitative and qualitative results. For PGD-GAN, because of the zero initialization, a high learning rate is required to get a meaningful output before passing it to the projection step. Therefore, we choose $\eta = 0.5$. The parameter $\eta_{in}$ is set to $0.01$ with $T=15$ and $T_{in}=200$. Thus, the total number of update steps is fixed to $3000$. Similarly, the algorithm of \cite{bora2017compressed} is tested with $3000$ updates and $\eta = 0.01$. For reporting purposes, we use the reconstruction error $= \|\widehat{x}-x^*\|^2$. In Fig. \ref{fig:mnist}(a), we show the reconstruction error comparisons for increasing values of number of measurements. It can be seen that our algorithm performs better than the other two methods. Also, as the input images are chosen from the span of the generator itself, it is possible to get close to zero error with only $100$ measurements. Fig. \ref{fig:mnist}(b) depicts reconstruction results for selected MNIST images.
 
The second set of our experiments are performed on a Deep Convolutional GAN (DCGAN) trained on the celebA dataset, which contains more than $200,000$ face images of celebrities \cite{liu2015deep}. We use a pre-trained DCGAN model, which was made available by \cite{bora2017compressed}. The dimension of the latent space for the DCGAN model is $k=100$. We report the results on a held out test dataset, unseen by the GAN at the time of training. The total number of updates is set to $1000$, with $T = 10$ and $T_{in}=100$. Learning rates for PGD-GAN are set as $\eta = 0.5$ and $\eta_{in}=0.1$. The algorithm of \cite{bora2017compressed} is run with $\eta = 0.1$ and $1000$ update steps. Image reconstruction results from $m=1000$ measurements with our algorithm are displayed in Fig.~\ref{fig:mnist}(c). We observe that our algorithm produces better reconstructions compared to the other baselines.

\subsection{Experiments with nonlinear inverse problems}
We extend our experiments to nonlinear models to depict the performance of our algorithm as described in Sec.~\ref{sec:nonlinear}. We present image reconstructions from the measurements obtained using two non-linear forward models: a sinusoidal model with $\mathcal{A}(x^*)=Ax^*+ \sin(Ax^*)$, and a sigmoidal model with $\mathcal{A}(x^*) = \text{sigmoid}(Ax^*)$. Similar to the linear case, these experiments are performed using a DCGAN trained on celebA. It is evident that our algorithm produces superior reconstructions as depicted in Fig.~\ref{fig:nonlinear}.

\begin{figure}
\centering
   	\begin{subfigure}[t]{0.40\textwidth}
		\includegraphics[width=\textwidth]{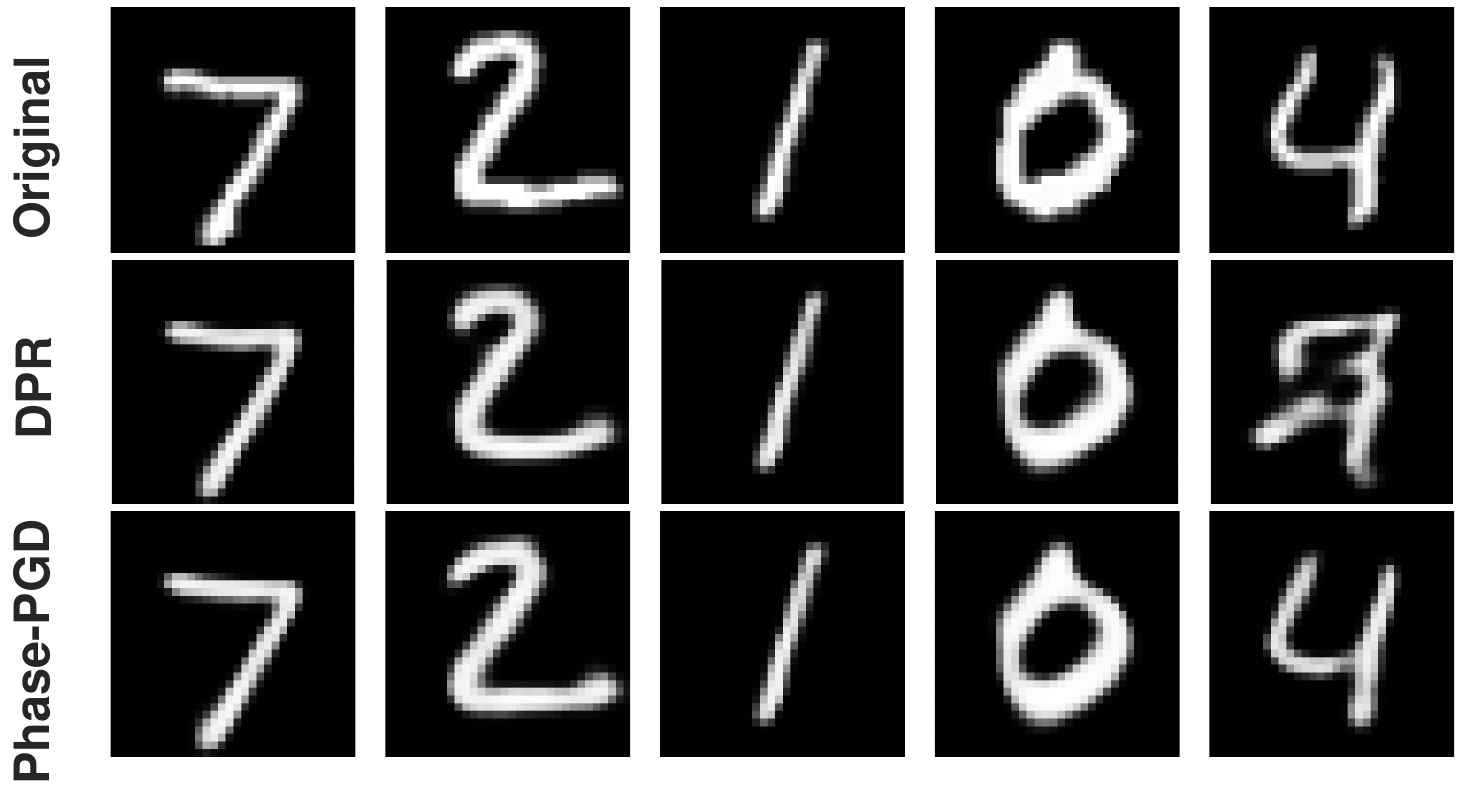}
		\caption{Reconstruction results on MNIST with $m = 60$ measurements.}
		\label{fig:mnist-rec}
		\end{subfigure}\hfill%
	\begin{subfigure}[t]{0.24\textwidth}
		\includegraphics[width=\textwidth]{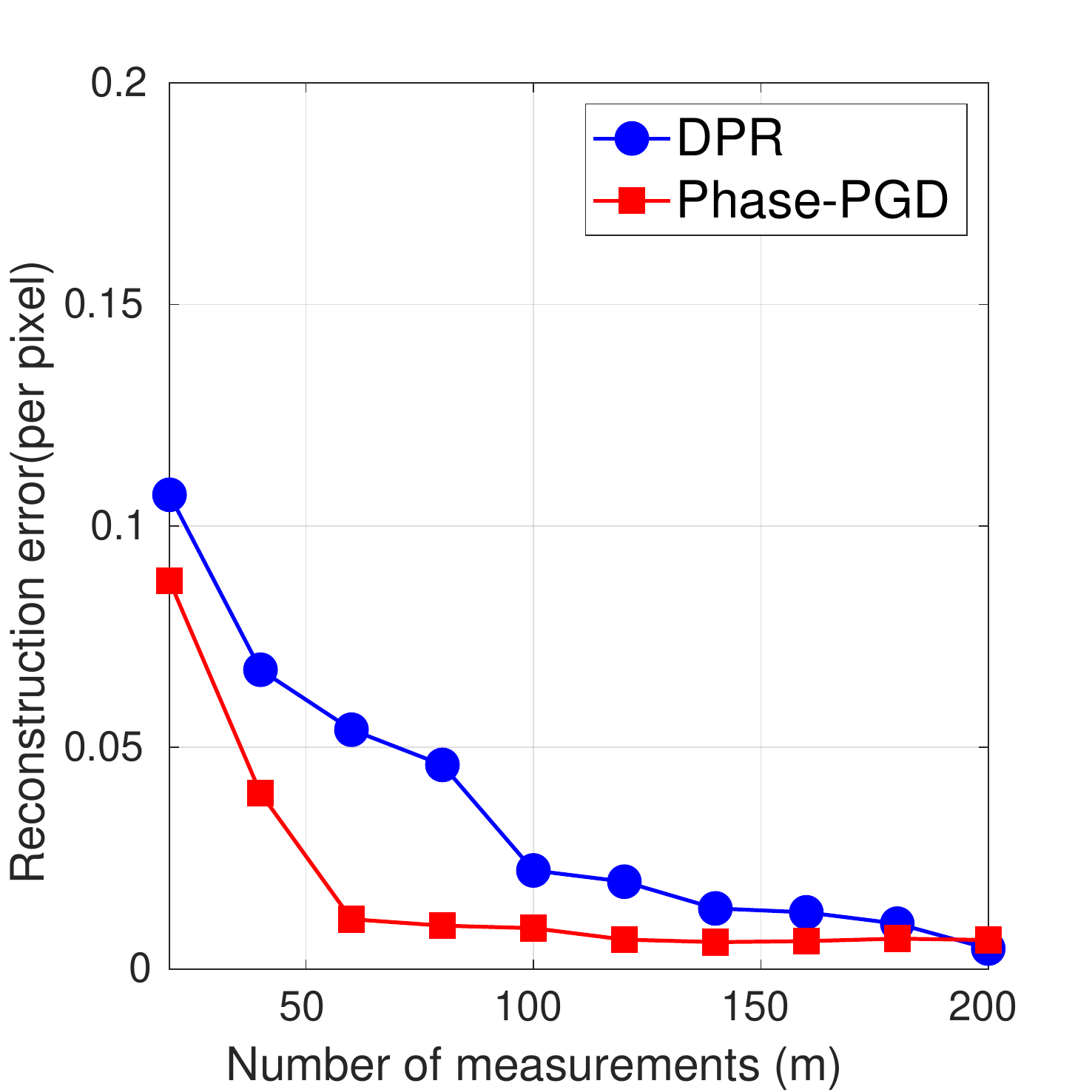}
		\caption{Reconstruction error}
		\label{fig:mnist-mse}
	\end{subfigure}\hfill%
	\begin{subfigure}[t]{0.24\textwidth}
		\includegraphics[width=\textwidth]{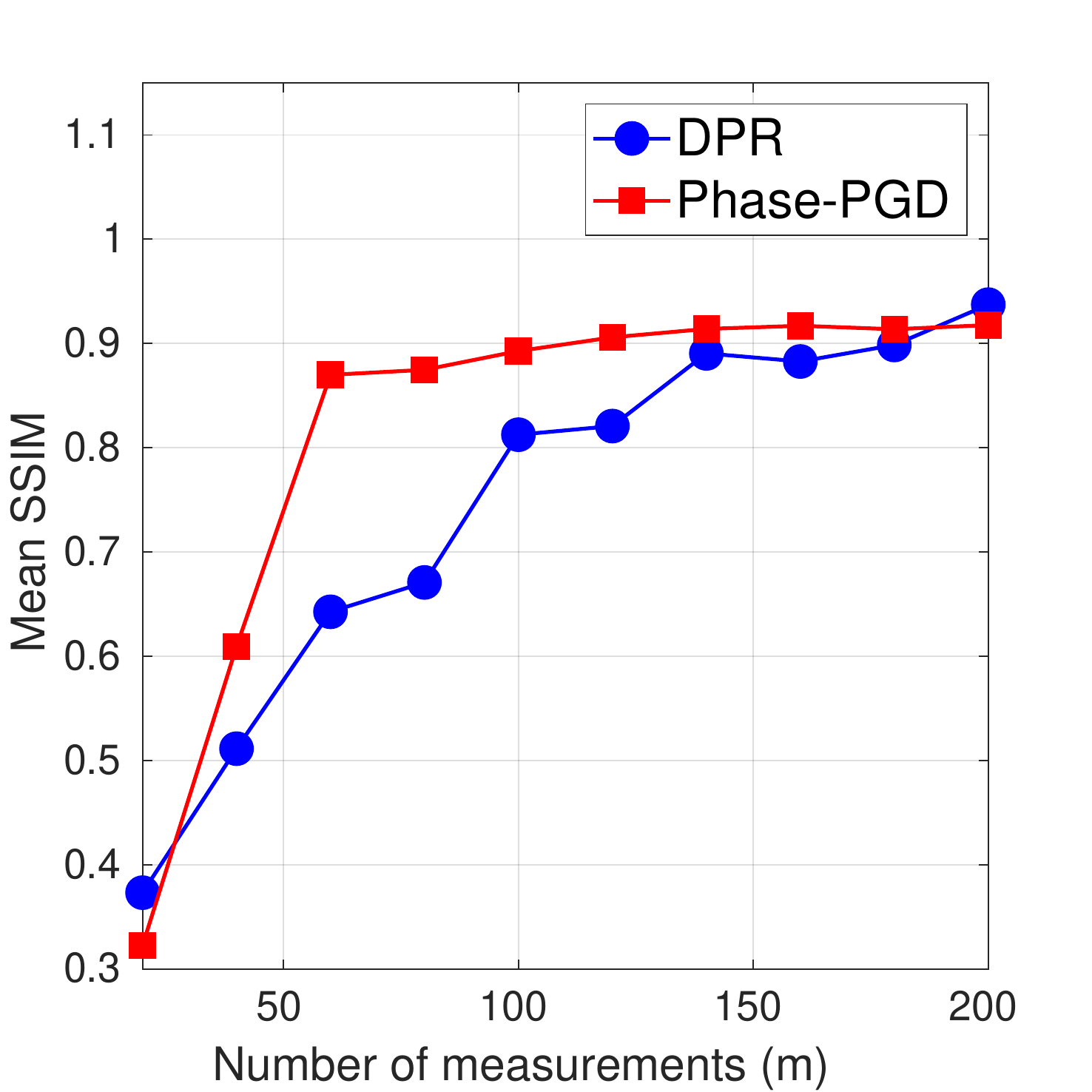}
		\caption{Mean SSIM}
		\label{fig:mnist-ssim}
	\end{subfigure}
	\caption{\textit{Comparison of Phase-PGD (ours) and DPR~\cite{hand2018phase} on MNIST test set.}}
\end{figure}

\begin{figure}
	\begin{subfigure}[t]{0.5\textwidth}
		\includegraphics[width=\textwidth]{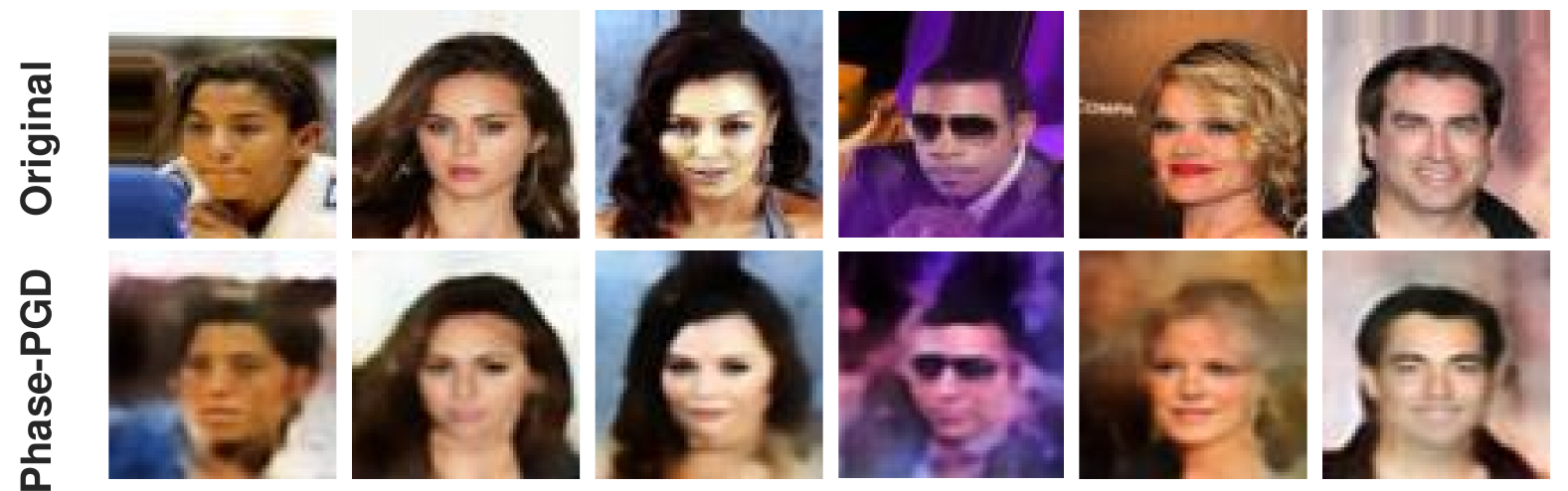}
		\caption{Reconstruction results on celebA dataset with $m = 1000$ measurements.}
		\label{fig:celeba-rec}
		\end{subfigure}\hfill%
   	\begin{subfigure}[t]{0.24\textwidth}
		\includegraphics[width=\textwidth]{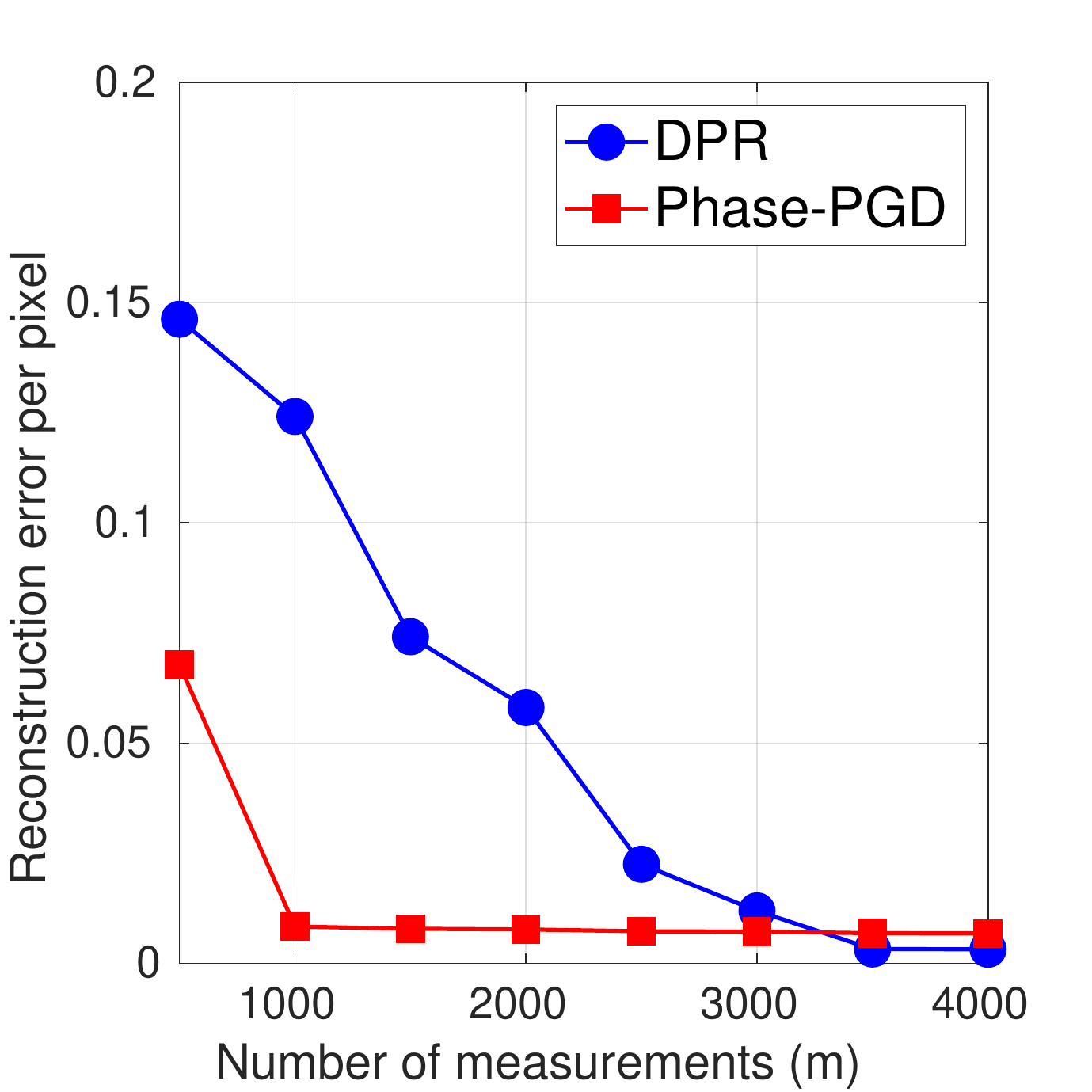}
		\caption{Reconstruction error}
		\label{fig:celeba-mse}
	\end{subfigure}\hfill%
	\begin{subfigure}[t]{0.24\textwidth}
		\includegraphics[width=\textwidth]{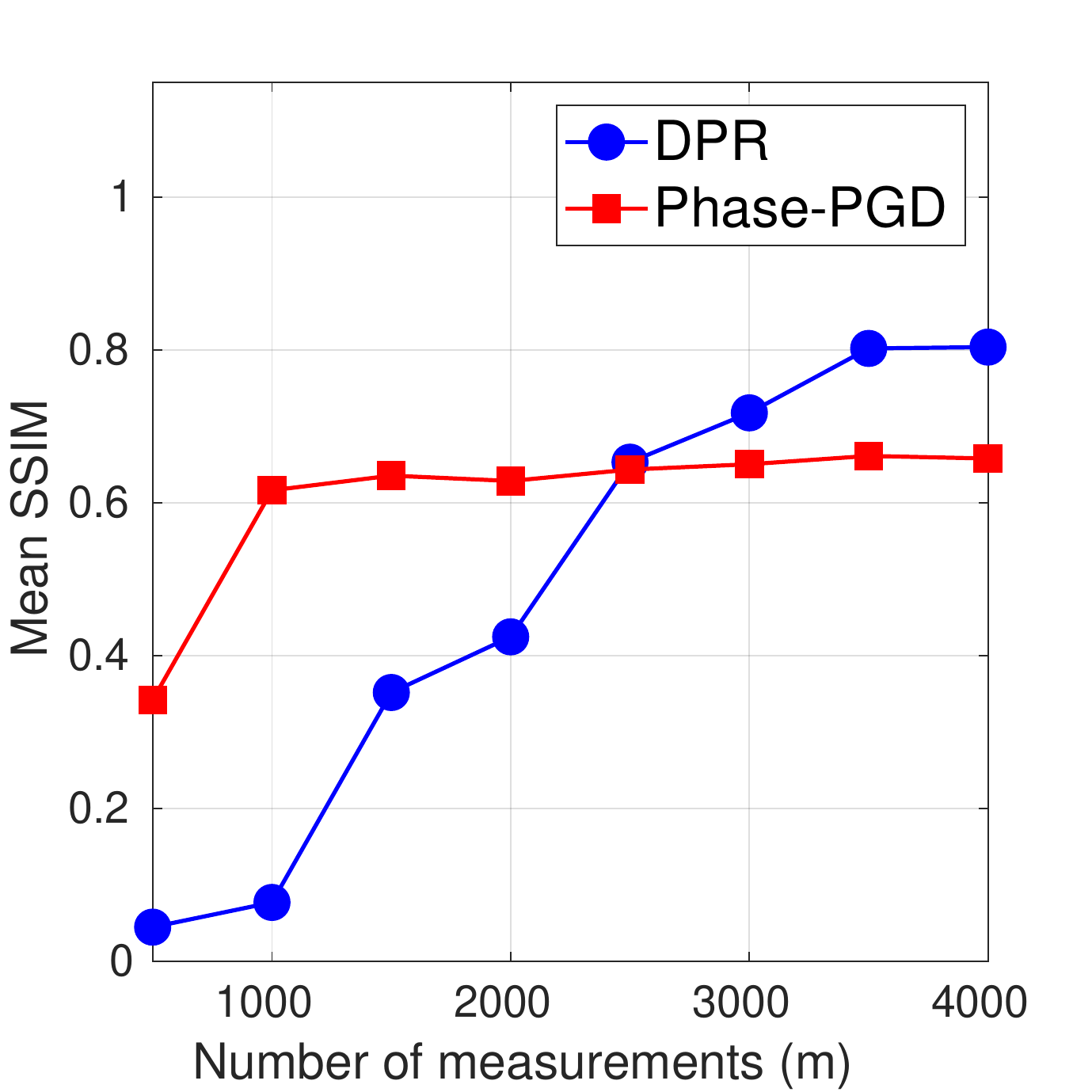}
		\caption{Mean SSIM}
		\label{fig:celeba-ssim}
	\end{subfigure}
	
	\caption{\emph{Comparison of Phase-PGD (ours) and DPR~\cite{hand2018phase} on celebA test set.}}
\end{figure}

\subsection{Experiments with phase retrieval}

In this section, we describe our experimental setup and report the performance comparisons of our proposed Phase-PGD and deep phase retrieval (DPR) method proposed in \cite{hand2018phase}. The DPR method estimates the signal as
\begin{equation}\label{eq:DPR}
     \widehat x = G\left(\argmin_z \; \| y - |\A G(\z)| \|_2^2\right), 
\end{equation}
which requires solving an optimization problem directly over the latent space $z$ of the generative model $G(\cdot)$. 

In our experiments, we choose the entries of the matrix $A$ independently from the $\mathcal{N}(0,\frac{1}{m})$ distribution. Although we ignore the presence of noise, it is possible to replicate our experiments with additive Gaussian noise. 

We use two different generative models for the MNIST and CelebA datasets. The generative model for CelebA follows the DCGAN framework \cite{radford2015unsupervised} except that we do not use any batchnorm layer since the gradient for this layer is dependent on batch size and the distribution of the batch. 
% The  generator architecture for MNIST experiments is shown in Fig. \ref{fig:intro1}. 
We train our generators by jointly optimizing generator parameters and the latent code $\z$ using SGD by following the procedure in  \cite{bojanowski2018optimizing}. We use the squared-loss function, $\|\x-\widehat{\x} \|^2 $ to train the generators. We choose $z$ from the standard normal distribution on $\mathbb{R}^k$ and normalize it to unit norm. We project $\z$ back to the unit norm  after each gradient update. 

% For all the approaches we kept the number of update steps fixed. We do not allow random restarts. 
% For fair comparison, we initialize $\x$ with the same random vector for all approaches and perform the same sign correction on them.

In our first set of experiments, we use a generator trained over the MNIST training dataset resized to $32\times32$ pixel. We test two approaches on 10 images from the test set of MNIST dataset and provide both quantitative and qualitative results. For Phase-PGD, we choose gradient descent step size $\eta=0.9$ and $T=50$. For fair comparison, we use 2500 iterations for DPR. We show the reconstruction error comparison in Fig. \ref{fig:mnist-mse} and SSIM comparison in Fig.~\ref{fig:mnist-ssim} for increasing number of measurements. Since the input images are not chosen from the span of the generator, the reconstruction error does not necessarily reduce to zero. Nevertheless, we observe in Fig.~\ref{fig:mnist-mse} that Phase-PGD error gets close to zero with nearly 60 measurements, which is significantly smaller than those of DPR method.  Fig.~\ref{fig:mnist-rec} depicts reconstruction results for some of the selected MNIST images for the two approaches.

For our second set of experiments, we train a generator for the CelebA dataset. For training, we resize 202,599 color images of celebrity faces in celebA dataset to $64\times64\times3$ and kept $\frac{1}{32}$ of the images for testing. We do not use the aligned and cropped version, which includes only the faces in the images. %We train our generator for CelebA for the rest of the images. 

We experiment on a subset of 10 images from the test dataset and report reconstruction results. We set the total number of updates to $1500$ with $T = 50$ for Phase-PGD. Image reconstruction results from $m=1000$ measurements using Phase-PGD algorithm are presented in Fig.~\ref{fig:celeba-rec}. A comparison between our method and DPR in terms of reconstruction error and SSIM is shown in Fig.~\ref{fig:celeba-mse} and Fig.~\ref{fig:celeba-ssim}, respectively.  We observe that Phase-PGD can achieve good reconstruction with significantly fewer measurements compared to DPR method. 

\section{DISCUSSION}\label{sec:conc}

%We provide some concluding remarks and potential directions for future work.

Our contributions in this paper are primarily theoretical. We also explored the practical benefits of our approach in the context of inverse problems such as compressive sensing and phase retrieval. The algorithms proposed in this paper are generic and can be potentially used to solve other nonlinear inverse problems as well. 

%In future work, we will explore the empirical benefits for nonlinear settings, and also test the efficacy of our myopic PGD algorithm for handling model mismatch.

%The main algorithmic message of this paper is to show that solving a variety of nonlinear inverse problems using a generative network model can be reduced to performing a set of $\varepsilon$-projections onto the range of the network model. This can be challenging in general; for most interesting generative networks, this itself is a nonconvex problem, and potentially hard. However, recent work by~\cite{hand-voroninski,heckel-hand} have studied special cases where this type of projection can be tractable; in particular, for certain neural networks satisfying certain randomness conditions, one can solve the projection problem using a variation of gradient descent (which is more or less what all approaches employ in practice). Studying the landscape of such projection problems is an interesting direction of future research.

We make several assumptions to enable our analysis. Some of them (for example, restricted strong convexity/smoothness; incoherence) are standard analysis tools and are common in the high-dimensional statistics and compressive sensing literature. However, in order to be applicable, they need to be verified for specific problems. A broader characterization of problems that do satisfy these assumptions will be of great interest.

%%%%%%%%%%%%%%%%%%%%%%%%%%%%%%%%%%%%%%%%%%%%%%%%%%%%%%%%%%%%%%%%%%%%%%%%%%%%%%%%

%%%%%%%%%%%%%%%%%%%%%%%%%%%%%%%%%%%%%%%%%%%%%%%%%%%%%%%%%%%%%%%%%%%%%%%%%%%%%%%%

%%%%%%%%%%%%%%%%%%%%%%%%%%%%%%%%%%%%%%%%%%%%%%%%%%%%%%%%%%%%%%%%%%%%%%%%%%%%%%%%
%\section*{APPENDIX}
%
%Appendixes should appear before the acknowledgment.

%\section*{ACKNOWLEDGMENTS}

%\addtolength{\textheight}{-23cm}   % This command serves to balance the column lengths
                                  % on the last page of the document manually. It shortens
                                  % the textheight of the last page by a suitable amount.
                                  % This command does not take effect until the next page
                                  % so it should come on the page before the last. Make
                                  % sure that you do not shorten the textheight too much.
%%%%%%%%%%%%%%%%%%%%%%%%%%%%%%%%%%%%%%%%%%%%%%%%%%%%%%%%%%%%%%%%%%%%%%%%%%%%%%%%

\bibliographystyle{unsrt}
\bibliography{./bib/chinbiblio,./bib/vsbib,./bib/mrsbiblio}

\begin{thebibliography}{10}

\bibitem{ganICASSP}
V.~Shah and C.~Hegde.
\newblock Solving linear inverse problems using gan priors: An algorithm with
  provable guarantees.
\newblock In {\em Proc. IEEE Int. Conf. Acoust., Speech, and Signal Processing
  (ICASSP)}, Apr. 2018.

\bibitem{ganAllerton18}
C.~Hegde.
\newblock Algorithmic aspects of inverse problems using generative models.
\newblock In {\em Proc. Allerton Conf. on Comm., Contr., and Comp.}, Oct. 2018.

\bibitem{hyder2019alternating}
Rakib Hyder, Viraj Shah, Chinmay Hegde, and M~Salman Asif.
\newblock Alternating phase projected gradient descent with generative priors
  for solving compressive phase retrieval.
\newblock In {\em ICASSP 2019-2019 IEEE International Conference on Acoustics,
  Speech and Signal Processing (ICASSP)}, pages 7705--7709. IEEE, 2019.

\bibitem{candes2006compressive}
E.~Cand{\`e}s et~al.
\newblock Compressive sampling.
\newblock In {\em Proc. of the intl. congress of math.}, volume~3, pages
  1433--1452. Madrid, Spain, 2006.

\bibitem{modelcs}
R.~Baraniuk, V.~Cevher, M.~Duarte, and C.~Hegde.
\newblock Model-based compressive sensing.
\newblock {\em IEEE Trans. Inform. Theory}, 56(4):1982--2001, Apr. 2010.

\bibitem{surveyEATCS}
C.~Hegde, P.~Indyk, and L.~Schmidt.
\newblock Fast algorithms for structured sparsity.
\newblock {\em Bulletin of the EATCS}, 1(117):197--228, Oct. 2015.

\bibitem{MarcoCISS}
M.~Duarte, C.~Hegde, V.~Cevher, and R.~Baraniuk.
\newblock Recovery of compressible signals from unions of subspaces.
\newblock In {\em Proc. IEEE Conf. Inform. Science and Systems (CISS)}, March
  2009.

\bibitem{elad2006image}
M.~Elad and M.~Aharon.
\newblock Image denoising via sparse and redundant representations over learned
  dictionaries.
\newblock {\em IEEE Trans. Image Processing}, 15(12):3736--3745, 2006.

\bibitem{aharon2006rm}
M.~Aharon, M.~Elad, and A.~Bruckstein.
\newblock $ rm k $-svd: An algorithm for designing overcomplete dictionaries
  for sparse representation.
\newblock {\em IEEE Trans. Signal Processing}, 54(11):4311--4322, 2006.

\bibitem{chan2006total}
T.~Chan, J.~Shen, and H.~Zhou.
\newblock Total variation wavelet inpainting.
\newblock {\em Jour. of Math. imaging and Vision}, 25(1):107--125, 2006.

\bibitem{sairprasad}
S.~Ravishankar and Y.~Bresler.
\newblock Learning sparsifying transforms.
\newblock {\em IEEE Trans. Signal Processing}, 61(5):1072--1086, 2013.

\bibitem{goodfellow2014generative}
I.~Goodfellow, J.~Pouget-Abadie, M.~Mirza, B.~Xu, D.~Warde-Farley, S.~Ozair,
  A.~Courville, and Y.~Bengio.
\newblock Generative adversarial nets.
\newblock In {\em Proc. Adv. in Neural Processing Systems (NIPS)}, pages
  2672--2680, 2014.

\bibitem{zhu2016generative}
J.~Zhu, P.~Kr{\"a}henb{\"u}hl, E.~Shechtman, and A.~Efros.
\newblock Generative visual manipulation on the natural image manifold.
\newblock In {\em Proc. European Conf. Comp. Vision (ECCV)}, 2016.

\bibitem{brock2016neural}
A.~Brock, T.~Lim, J.~Ritchie, and N.~Weston.
\newblock Neural photo editing with introspective adversarial networks.
\newblock {\em arXiv preprint arXiv:1609.07093}, 2016.

\bibitem{chen2016infogan}
X.~Chen, Y.~Duan, R.~Houthooft, J.~Schulman, I.~Sutskever, and P.~Abbeel.
\newblock Infogan: Interpretable representation learning by information
  maximizing generative adversarial nets.
\newblock In {\em Proc. Adv. in Neural Processing Systems (NIPS)}, pages
  2172--2180, 2016.

\bibitem{zhao2016energy}
J.~Zhao, M.~Mathieu, and Y.~LeCun.
\newblock Energy-based generative adversarial network.
\newblock {\em arXiv preprint arXiv:1609.03126}, 2016.

\bibitem{berthelot2017began}
D.~Berthelot, T.~Schumm, and L.~Metz.
\newblock Began: Boundary equilibrium generative adversarial networks.
\newblock {\em arXiv preprint arXiv:1703.10717}, 2017.

\bibitem{yeh2016semantic}
R.~Yeh, C.~Chen, T.~Lim, M.~Hasegawa-Johnson, and M.~Do.
\newblock Semantic image inpainting with perceptual and contextual losses.
\newblock {\em arXiv preprint arXiv:1607.07539}, 2016.

\bibitem{ledig2016photo}
C.~Ledig, L.~Theis, F.~Husz{\'a}r, J.~Caballero, A.~Cunningham, A.~Acosta,
  A.~Aitken, A.~Tejani, J.~Totz, Z.~Wang, et~al.
\newblock Photo-realistic single image super-resolution using a generative
  adversarial network.
\newblock {\em Proc. IEEE Conf. Comp. Vision and Pattern Recog. (CVPR)}, pages
  105--114, 2017.

\bibitem{bora2017compressed}
A.~Bora, A.~Jalal, E.~Price, and A.~Dimakis.
\newblock Compressed sensing using generative models.
\newblock {\em Proc. Int. Conf. Machine Learning}, 2017.

\bibitem{candes2015phase}
E.~Candes, X.~Li, and M.~Soltanolkotabi.
\newblock Phase retrieval via wirtinger flow: Theory and algorithms.
\newblock {\em IEEE Trans. Inform. Theory}, 61(4):1985--2007, 2015.

\bibitem{demanet2014stable}
L.~Demanet and P.~Hand.
\newblock Stable optimizationless recovery from phaseless linear measurements.
\newblock {\em Journal of Fourier Analysis and Applications}, 20(1):199--221,
  2014.

\bibitem{spinisit}
C.~Hegde and R.~Baraniuk.
\newblock {SPIN}: Iterative signal recovery on incoherent manifolds.
\newblock In {\em Proc. IEEE Int. Symp. Inform. Theory (ISIT)}, July 2012.

\bibitem{spinIT}
C.~Hegde and R.~Baraniuk.
\newblock Signal recovery on incoherent manifolds.
\newblock {\em IEEE Trans. Inform. Theory}, 58(12):7204--7214, Dec. 2012.

\bibitem{sparsegen}
M.~Dhar, A.~Grover, and S.~Ermon.
\newblock Modeling sparse deviations for compressed sensing using generative
  models.
\newblock In {\em Proc. Int. Conf. Machine Learning}, 2018.

\bibitem{raskutti2010restricted}
G.~Raskutti, M.~J Wainwright, and B.~Yu.
\newblock Restricted eigenvalue properties for correlated gaussian designs.
\newblock {\em J. Machine Learning Research}, 11(Aug):2241--2259, 2010.

\bibitem{jainkar2017}
P.~Jain and P.~Kar.
\newblock Non-convex optimization for machine learning.
\newblock {\em Foundations and Trends in Machine Learning}, 10(3-4):142--336,
  2017.

\bibitem{mccoyTropp2014}
M.~McCoy and J.~Tropp.
\newblock Sharp recovery bounds for convex demixing, with applications.
\newblock {\em Foundations of Comp. Math.}, 14(3):503--567, 2014.

\bibitem{NLDemix_TSP}
M.~Soltani and C.~Hegde.
\newblock Fast algorithms for demixing signals from nonlinear observations.
\newblock {\em IEEE Trans. Sig. Proc.}, 65(16):4209--4222, Aug. 2017.

\bibitem{donoho1995noising}
D.~Donoho.
\newblock De-noising by soft-thresholding.
\newblock {\em IEEE Trans. Inform. Theory}, 41(3):613--627, 1995.

\bibitem{xu2010image}
Z.~Xu and J.~Sun.
\newblock Image inpainting by patch propagation using patch sparsity.
\newblock {\em IEEE Trans. Image Processing}, 19(5):1153--1165, 2010.

\bibitem{dong2011image}
W.~Dong, L.~Zhang, G.~Shi, and X.~Wu.
\newblock Image deblurring and super-resolution by adaptive sparse domain
  selection and adaptive regularization.
\newblock {\em IEEE Trans. Image Processing}, 20(7):1838--1857, 2011.

\bibitem{lecun2015deep}
Y.~LeCun, Y.~Bengio, and G.~Hinton.
\newblock Deep learning.
\newblock {\em Nature}, 521(7553):436--444, 2015.

\bibitem{kulkarni2016reconnet}
K.~Kulkarni, S.~Lohit, P.~Turaga, R.~Kerviche, and A.~Ashok.
\newblock Reconnet: Non-iterative reconstruction of images from compressively
  sensed measurements.
\newblock In {\em Proc. IEEE Conf. Comp. Vision and Pattern Recog. (CVPR)},
  pages 449--458, 2016.

\bibitem{mousavi2015deep}
A.~Mousavi, A.~Patel, and R.~Baraniuk.
\newblock A deep learning approach to structured signal recovery.
\newblock In {\em Proc. Allerton Conf. Communication, Control, and Computing},
  pages 1336--1343, 2015.

\bibitem{mousavi2017learning}
A.~Mousavi and R.~Baraniuk.
\newblock Learning to invert: Signal recovery via deep convolutional networks.
\newblock {\em Proc. IEEE Int. Conf. Acoust., Speech, and Signal Processing
  (ICASSP)}, 2017.

\bibitem{xu2014deep}
L.~Xu, J.~Ren, C.~Liu, and J.~Jia.
\newblock Deep convolutional neural network for image deconvolution.
\newblock In {\em Proc. Adv. in Neural Processing Systems (NIPS)}, pages
  1790--1798, 2014.

\bibitem{dong2016image}
C.~Dong, C.~Loy, K.~He, and X.~Tang.
\newblock Image super-resolution using deep convolutional networks.
\newblock {\em IEEE Trans. Pattern Anal. Machine Intell.}, 38(2):295--307,
  2016.

\bibitem{kim2016accurate}
J.~Kim, J.~Kwon~Lee, and K.~Mu~Lee.
\newblock Accurate image super-resolution using very deep convolutional
  networks.
\newblock In {\em Proc. IEEE Conf. Comp. Vision and Pattern Recog. (CVPR)},
  pages 1646--1654, 2016.

\bibitem{yeh2017semantic}
R.~Yeh, C.~Chen, T.-Y. Lim, A.~Schwing, M.~Hasegawa-Johnson, and M.~Do.
\newblock Semantic image inpainting with deep generative models.
\newblock In {\em Proc. IEEE Conf. Comp. Vision and Pattern Recog. (CVPR)},
  volume~2, page~4, 2017.

\bibitem{metzler2020optica}
Christopher~A Metzler, Felix Heide, Prasana Rangarajan, Muralidhar~Madabhushi
  Balaji, Aparna Viswanath, Ashok Veeraraghavan, and Richard~G Baraniuk.
\newblock Deep-inverse correlography: towards real-time high-resolution
  non-line-of-sight imaging.
\newblock {\em Optica}, 7(1):63--71, 2020.

\bibitem{rick2017one}
J.~Rick~Chang, C.~Li, B.~Poczos, B.~Vijaya~Kumar, and A.~Sankaranarayanan.
\newblock One network to solve them all--solving linear inverse problems using
  deep projection models.
\newblock In {\em Proc. IEEE Conf. Comp. Vision and Pattern Recog. (CVPR)},
  pages 5888--5897, 2017.

\bibitem{arjovsky2017wasserstein}
M.~Arjovsky, S.~Chintala, and L.~Bottou.
\newblock Wasserstein gan.
\newblock {\em arXiv preprint arXiv:1701.07875}, 2017.

\bibitem{pix2pix}
J.-Y. Zhu, T.~Park, P.~Isola, and A.~Efros.
\newblock Unpaired image-to-image translation using cycle-consistent
  adversarial networks.
\newblock In {\em Proc. IEEE Conf. Comp. Vision and Pattern Recog. (CVPR)},
  2017.

\bibitem{simonyan2018}
A.~Brock, J.~Donahue, and K.~Simonyan.
\newblock Large scale gan training for high fidelity natural image synthesis.
\newblock {\em arXiv preprint arXiv:1809.11096}, 2018.

\bibitem{deepprior}
D.~Van~Veen, A.~Jalal, E.~Price, S.~Vishwanath, and A.~Dimakis.
\newblock Compressed sensing with deep image prior and learned regularization.
\newblock {\em arXiv preprint arXiv:1806.06438}, 2018.

\bibitem{hand2019global}
Paul Hand and Vladislav Voroninski.
\newblock Global guarantees for enforcing deep generative priors by empirical
  risk.
\newblock {\em IEEE Transactions on Information Theory}, 66(1):401--418, 2019.

\bibitem{jagatap2019algorithmic}
G~Jagatap and C~Hegde.
\newblock Algorithmic guarantees for inverse imaging with untrained network
  priors.
\newblock {\em Advances in neural information processing systems}, 2019.

\bibitem{raj2019gan}
A.~Raj, Y.~Li, and Y.~Bresler.
\newblock Gan-based projector for faster recovery in compressed sensing with
  convergence guarantees.
\newblock {\em arXiv preprint arXiv:1902.09698}, 2019.

\bibitem{gerchberg1972phase}
R.W. Gerchberg and A~Saxton W.~O.
\newblock A practical algorithm for the determination of phase from image and
  diffraction plane pictures.
\newblock {\em Optik}, 35:237--250, 11 1971.

\bibitem{fienup1982phase}
James~R Fienup.
\newblock Phase retrieval algorithms: a comparison.
\newblock {\em Applied optics}, 21(15):2758--2769, 1982.

\bibitem{candes2013phaselift}
Emmanuel~J Candes, Thomas Strohmer, and Vladislav Voroninski.
\newblock Phaselift: Exact and stable signal recovery from magnitude
  measurements via convex programming.
\newblock {\em Communications on Pure and Applied Mathematics},
  66(8):1241--1274, 2013.

\bibitem{holloway2016toward}
Jason Holloway, M.~Salman Asif, Manoj~Kumar Sharma, Nathan Matsuda, Roarke
  Horstmeyer, Oliver Cossairt, and Ashok Veeraraghavan.
\newblock Toward long-distance subdiffraction imaging using coherent camera
  arrays.
\newblock {\em IEEE Transactions on Computational Imaging}, 2(3):251--265,
  2016.

\bibitem{tian2014multiplexed}
Lei Tian, Xiao Li, Kannan Ramchandran, and Laura Waller.
\newblock Multiplexed coded illumination for fourier ptychography with an led
  array microscope.
\newblock {\em Biomedical optics express}, 5(7):2376--2389, 2014.

\bibitem{miao1999extending}
Jianwei Miao, Pambos Charalambous, Janos Kirz, and David Sayre.
\newblock Extending the methodology of x-ray crystallography to allow imaging
  of micrometre-sized non-crystalline specimens.
\newblock {\em Nature}, 400(6742):342--344, 1999.

\bibitem{bahmani}
S.~Bahmani and J.~Romberg.
\newblock Efficient compressive phase retrieval with constrained sensing
  vectors.
\newblock In {\em Proc. Adv. in Neural Processing Systems (NIPS)}, pages
  523--531, 2015.

\bibitem{goldstein2018phasemax}
Tom Goldstein and Christoph Studer.
\newblock Phasemax: Convex phase retrieval via basis pursuit.
\newblock {\em IEEE Transactions on Information Theory}, 64(4):2675--2689,
  2018.

\bibitem{netrapalli}
P.~Netrapalli, P.~Jain, and S.~Sanghavi.
\newblock Phase retrieval using alternating minimization.
\newblock In {\em Proc. Adv. in Neural Processing Systems (NIPS)}, pages
  2796--2804, 2013.

\bibitem{wang2017solving}
G.~Wang, G.~Giannakis, Y.~Saad, and J.~Chen.
\newblock Solving most systems of random quadratic equations.
\newblock In {\em Proc. Adv. in Neural Processing Systems (NIPS)}, 2017.

\bibitem{candes2015codeddiff}
Emmanuel~J Candes, Xiaodong Li, and Mahdi Soltanolkotabi.
\newblock Phase retrieval from coded diffraction patterns.
\newblock {\em Applied and Computational Harmonic Analysis}, 39(2):277--299,
  2015.

\bibitem{copram}
G.~Jagatap and C.~Hegde.
\newblock Fast, sample efficient algorithms for structured phase retrieval.
\newblock In {\em Proc. Adv. in Neural Processing Systems (NIPS)}, pages
  4924--4934, 2017.

\bibitem{structphase}
G.~Jagatap and C.~Hegde.
\newblock Sample-efficient algorithms for recovering structured signals from
  magnitude-only measurements.
\newblock {\em IEEE Trans. Inform. Theory}, 65(7):4435--4456, July 2019.

\bibitem{barbastathis2019use}
George Barbastathis, Aydogan Ozcan, and Guohai Situ.
\newblock On the use of deep learning for computational imaging.
\newblock {\em Optica}, 6(8):921--943, 2019.

\bibitem{sinha2017lensless}
Ayan Sinha, Justin Lee, Shuai Li, and George Barbastathis.
\newblock Lensless computational imaging through deep learning.
\newblock {\em Optica}, 4(9):1117--1125, 2017.

\bibitem{shah2011iterative}
P.~Shah and V.~Chandrasekaran.
\newblock Iterative projections for signal identification on manifolds: Global
  recovery guarantees.
\newblock In {\em Proc. Allerton Conf. Communication, Control, and Computing},
  pages 760--767, 2011.

\bibitem{foucart2013}
S.~Foucart and H.~Rauhut.
\newblock {\em A mathematical introduction to compressive sensing}, volume~1.
\newblock Springer, 2013.

\bibitem{lei2019inverting}
Qi~Lei, Ajil Jalal, Inderjit~S Dhillon, and Alexandros~G Dimakis.
\newblock Inverting deep generative models, one layer at a time.
\newblock {\em Advances in Neural Information Processing Systems},
  32:13910--13919, 2019.

\bibitem{negahban2012unified}
S.~Negahban, P.~Ravikumar, M.~Wainwright, B.~Yu, et~al.
\newblock A unified framework for high-dimensional analysis of $ m $-estimators
  with decomposable regularizers.
\newblock {\em Statistical Science}, 27(4):538--557, 2012.

\bibitem{soltani2016fast}
M.~Soltani and C.~Hegde.
\newblock Fast algorithms for demixing sparse signals from nonlinear
  observations.
\newblock {\em IEEE Transactions on Signal Processing}, 65(16):4209--4222,
  2017.

\bibitem{Jagatap2017}
Gauri Jagatap and Chinmay Hegde.
\newblock Fast, sample-efficient algorithms for structured phase retrieval.
\newblock In {\em Proceedings of the 31st International Conference on Neural
  Information Processing Systems}, pages 4924--4934, 2017.

\bibitem{jagatap2019sample}
Gauri Jagatap and Chinmay Hegde.
\newblock Sample-efficient algorithms for recovering structured signals from
  magnitude-only measurements.
\newblock {\em IEEE Transactions on Information Theory}, 65(7):4434--4456,
  2019.

\bibitem{lecun1998gradient}
Y.~LeCun, L.~on Bottou, Y.~Bengio, and P.~Haffner.
\newblock Gradient-based learning applied to document recognition.
\newblock {\em Proc. of the IEEE}, 86(11):2278--2324, 1998.

\bibitem{kingma2014adam}
D.~Kingma and J.~Ba.
\newblock Adam: A method for stochastic optimization.
\newblock {\em arXiv preprint arXiv:1412.6980}, 2014.

\bibitem{liu2015deep}
Z.~Liu, P.~Luo, X.~Wang, and X.~Tang.
\newblock Deep learning face attributes in the wild.
\newblock In {\em Proc. of the IEEE Intl. Conf. on Comp. Vision}, pages
  3730--3738, 2015.

\bibitem{hand2018phase}
Paul Hand, Oscar Leong, and Vladislav Voroninski.
\newblock Phase retrieval under a generative prior.
\newblock In {\em Proc. Adv. in Neural Processing Systems (NIPS)}, pages
  9154--9164, 2018.

\bibitem{radford2015unsupervised}
A.~Radford, L.~Metz, and S.~Chintala.
\newblock Unsupervised representation learning with deep convolutional
  generative adversarial networks.
\newblock {\em Proc. Int. Conf. Learning Representations (ICLR)}, 2016.

\bibitem{bojanowski2018optimizing}
Piotr Bojanowski, Armand Joulin, David Lopez-Pas, and Arthur Szlam.
\newblock Optimizing the latent space of generative networks.
\newblock In {\em International Conference on Machine Learning}, pages
  600--609. PMLR, 2018.

\end{thebibliography}
\end{document}